\def\BibTeX{{\rm B\kern-.05em{\sc i\kern-.025em b}\kern-.08em
    T\kern-.1667em\lower.7ex\hbox{E}\kern-.125emX}}
\newtheorem{theorem}{Theorem}
\newtheorem{proof}{Proof}
\newtheorem{definition}{Definition}
\newcommand{\method}{\textsc{P3GM}\xspace}    
\newcommand{\Add}[1]{\textcolor{black}{#1}}
\begin{document}

\title{P3GM: Private High-Dimensional Data Release via Privacy Preserving Phased Generative Model$^1$\thanks{$^1$In the version published in ICDE21, we used the Wishart mechanism for PCA~\cite{jiang2016wishart}. However, as pointed out by Cao et al.~\cite{cao2021don}, it was made clear that the Wishart mechanism does not satisfy differential privacy~\cite{wishartwrongproof}, so we change the Wishart mechanism to the Gaussian mechanism~\cite{dwork2014analyze}.
Accordingly, we changed the description about DP-PCA~(Section~\ref{subsec:dpmethods}), the privacy proof~(Section~\ref{sec:privloss}), and the experimental results~\ref{sec:exp} and Fig.~2 (e).
We confirmed that the comparison results by experiments are not affected by this change. 
Now, the results in Section~\ref{sec:exp} are the ones by the Gaussian version.}}

\author{\IEEEauthorblockN{Shun Takagi\IEEEauthorrefmark{1}\IEEEauthorrefmark{2}\IEEEauthorrefmark{4}\thanks{\IEEEauthorrefmark{1} Equal contribution. \IEEEauthorrefmark{4} A main part of the author's work was done while staying at LINE Corporation.},
Tsubasa Takahashi\IEEEauthorrefmark{1}\IEEEauthorrefmark{3}, 
Yang Cao\IEEEauthorrefmark{2} and
Masatoshi Yoshikawa\IEEEauthorrefmark{2}}
\IEEEauthorblockA{
\IEEEauthorrefmark{2}Kyoto University,
\IEEEauthorrefmark{3}LINE Corporation\\
takagi.shun.45a@st.kyoto-u.ac.jp,
tsubasa.takahashi@linecorp.com,
\{yang, yoshikawa\}@i.kyoto-u.ac.jp}}


\maketitle

\thispagestyle{plain}
\pagestyle{plain}

\begin{abstract}
How can we release a massive volume of sensitive data while mitigating privacy risks?
Privacy-preserving data synthesis enables the data holder to outsource analytical tasks to an untrusted third party.
The state-of-the-art approach for this problem is to build a generative model under differential privacy, which offers a rigorous privacy guarantee. 
However, the existing method cannot adequately handle high dimensional data.
In particular, when the input dataset contains a large number of features, the existing techniques require injecting a prohibitive amount of noise to satisfy differential privacy, which results in the outsourced data analysis meaningless.
To address the above issue, this paper proposes privacy-preserving phased generative model (P3GM), which is a differentially private generative model for releasing such sensitive data.
P3GM employs the two-phase learning process to make it robust against the noise, and to increase learning efficiency (e.g., easy to converge).
We give theoretical analyses about the learning complexity and privacy loss in P3GM.
We further experimentally evaluate our proposed method and demonstrate that P3GM significantly outperforms existing solutions.
Compared with the state-of-the-art methods, our generated samples look fewer noises and closer to the original data in terms of data diversity.
Besides, in several data mining tasks with synthesized data, our model outperforms the competitors in terms of accuracy.
\end{abstract}

\begin{IEEEkeywords}
differential privacy, variational autoencoder, generative model, privacy preserving data synthesis
\end{IEEEkeywords}

\setcounter{footnote}{1}

\section{Introduction} \label{sec:intro}

The problem of private data release, including privacy-preserving data publishing (PPDP) \cite{machanavajjhala2006diversity} \cite{sweeney2002k} and privacy-preserving data synthesis (PPDS) \cite{acs2018differentially} \cite{bindschaedler2017plausible} \cite{jordon2018pate} \cite{zhang2014privbayes}, has become increasingly important in recent years.
We often encounter situations where a data holder wishes to outsource analytical tasks to the data scientists in a third party, and even in a different division in the same office, without revealing private, sensitive information.
This outsourced data analysis raises privacy issues that the details of the private datasets, such as information about the census, health data, and financial records, are revealed to an untrusted third-party.
Due to the growth of data science and smart devices, high dimensional, complex data related to an individual, such as face images for authentications and daily location traces, have been collected.
In each example, there are many potential usages, privacy risks, and adversaries.

\begin{figure}
    \centering
    \includegraphics[width=0.9\hsize]{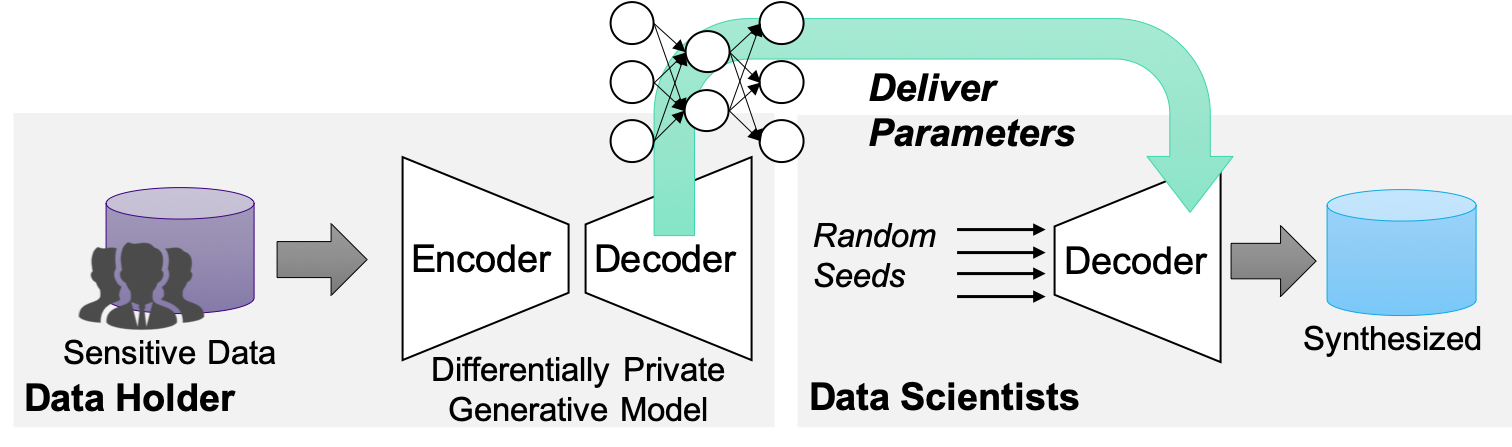}
    \caption{Privacy-preserving data synthesis via sharing a decoder of a differentially private generative model.}
    \label{fig:overview1}
\end{figure}

For the PPDP, a traditional approach is to ensure $k$-anonymity \cite{sweeney2002k}. There are lots of anonymization algorithms for various data domains \cite{abul2008never} \cite{domingo2020general} \cite{machanavajjhala2006diversity}. However, $k$-anonymity does not take into account adversaries' background knowledge.

For releasing private statistical aggregates, differential privacy (DP in short) is known as the golden standard privacy notion \cite{dwork2006differential}. 
Differential privacy seeks a rigorous privacy guarantee, without making restrictive assumptions about the adversary.
Informally, this model requires that what can be learned from the released data is approximately the same, whether or not any particular individual was included in the input database.
Differential privacy is used in broad domains and applications \cite{bindschaedler2017plausible} \cite{chaudhuri2019capacity} \cite{papernot2016semi}. 
The importance of DP can be seen from the fact that US census announced '2020 Census results will be protected using “differential privacy,” the new gold standard in data privacy protection' \cite{abowd2018us} \cite{uscensus2020}.

Differentially private data synthesis (DPDS) builds a generative model satisfying DP to produce privacy-preserving synthetic data from the sensitive data.
It has been well-studied in the literature \cite{acs2018differentially} \cite{chen2015differentially} \cite{jordon2018pate}  \cite{xie2018differentially} \cite{zhang2014privbayes} \cite{zhang2016privtree}.
DPDS protects privacy by sharing a differentially private generative model to the third party, instead of the raw datasets (Figure \ref{fig:overview1}).
In recent years, NIST held a competition in which contestants proposed a mechanism for DPDS while maintaining a dataset’s utility for analysis \cite{nistppds2018}.

\begin{figure*}
    \centering    
    \subfloat[MNIST (original)]{
        \includegraphics[width=0.19\hsize]{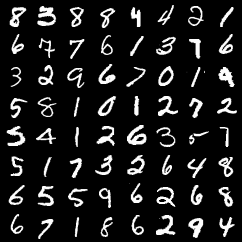}
        \label{fig:org_mnist}
    }    
    \subfloat[VAE \cite{kingma2013auto}]{
        \includegraphics[width=0.19\hsize]{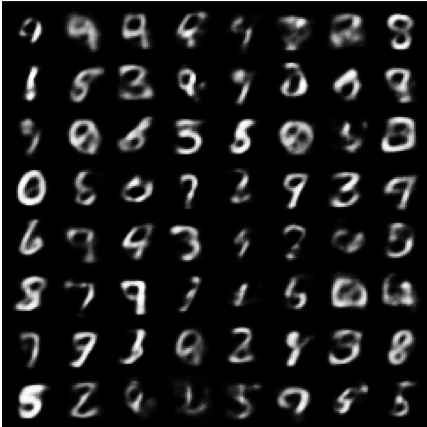}
        \label{fig:vae_mnist}
    }
    \subfloat[DP-VAE (VAE w/ DP-SGD)]{
        \includegraphics[width=0.19\hsize]{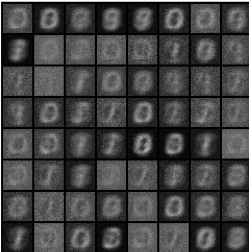}
        \label{fig:dpvae_mnist}
    }
    \subfloat[DP-GM \cite{acs2018differentially}]{
        \includegraphics[width=0.19\hsize]{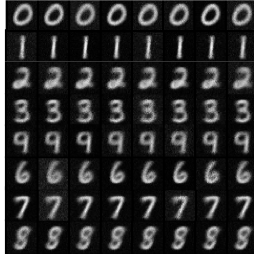}
        \label{fig:dpgm_mnist}
    }
    \subfloat[Proposed Method]{
        \includegraphics[width=0.19\hsize]{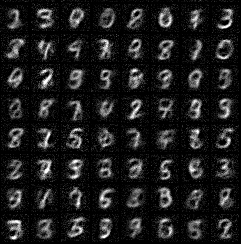}
        \label{fig:p3gm_mnist}
    }
    \caption{Sampled images from (b) VAE \cite{kingma2013auto}, (c) DP-VAE, (d) DP-GM \cite{acs2018differentially} and (e) proposed method \method. These four models are trained with (a) MNIST. (b), (c) and (d) satisfies ($1, 10^{-5}$)-differential privacy. Comparing the images sampled from DP-VAE and DP-GM, \method generates finer and more diverse samples. \method generates images that are visually closer to (a) and (b).}
    \label{fig:crown_jewel}
\end{figure*}

\begin{table*}[t]
    \centering
    \small
    \begin{tabular}{lccccc}
    \toprule
     & PrivBayes~\cite{zhang2014privbayes} 
     & \Add{Ryan's~\cite{kenna2019nistwinner}}
     & VAE with DP-SGD
     & DP-GM~\cite{acs2018differentially}
     & \textbf{\method (ours)} \\
    \cmidrule(lr){1-1}
    \cmidrule(lr){2-6}
    PPDS under differential privacy     & \checkmark & \checkmark\footnotemark[1] & \checkmark & \checkmark & \checkmark \\
    \Add{Utility in classification tasks}           & \checkmark & &        &            & \checkmark \\
    Capacity for high dimensional data  & & \checkmark &          & \checkmark & \checkmark \\
    \bottomrule
    \end{tabular}
    \caption{Contrast with competitors. Only the proposed method achieves all requirements in PPDS for high dimensional data.}
\label{tbl:salesman}
\end{table*}

To preserve utility in data mining and machine learning tasks, a generative model should have the following properties: 
1) data generated by the generative model follows actual data distribution; and 
2) it can generate high dimensional data.
However, the existing DPDS algorithms are insufficient for high dimensional data.
When the input dataset contains a large number of features, the existing techniques require injecting a prohibitive amount of noise to satisfy DP.
This issue results in the outsourced data analysis meaningless.

We now explain the existing models and their issues summarized in Table \ref{tbl:salesman}.
DPDS has been studied in the past ten years.
Traditional approaches are based on capturing probabilistic models, low rank structure, and learning statistical characteristics from original sensitive database \cite{chen2015differentially} \cite{zhang2014privbayes} \cite{zhang2016privtree}.
PrivBayes \cite{zhang2014privbayes} is a generative model that constructs a Bayesian network with DP guarantee.
However, since PrivBayes only constructs the Bayesian network among a few attributes, it is not suitable for high dimensional data. 
\Add{Ryan~\cite{kenna2019nistwinner}\footnote{\Add{Ryan's algorithm requires public information of the dataset to get relationships of correlations. We use a part of the dataset to get them without privacy protection, following the open source code~\cite{kenna2019nistwinner}.}}
was the winner of the NIST's competition by outperforming PrivBayes. 
Ryan's algorithm includes two steps to synthesize a dataset: 1) measures a chosen set of 1, 2, and 3-way marginals of the dataset with Gaussian mechanism, and 2) makes a dataset that has those marginals.
We note that this algorithm requires public information to get the crucial set of 1, 2, and 3-way marginals.}

Deep generative models have been significantly improved in the past few years.
According to the advancement, constructing deep generative models under differential privacy is also a promising direction.
We have two distinguished generative models: generative adversarial nets (GAN) \cite{goodfellow2014generative} and variational autoencoder (VAE) \cite{kingma2016improved} \cite{kingma2013auto}.

GANs can generate high quality data by optimizing a minimax objective.
However, it is well known that samples from GANs do not fully capture the diversity of the true distribution due to mode collapse.
Furthermore, GANs are challenging to evaluate, and require a lot of iterations to converge.
Therefore, under differential privacy, such learning processes tend to inject a vast amount of noise.
Actually, existing GAN based models with DP have significant limitations.
DP-GAN \cite{xie2018differentially} needs to construct a generative model for each digit on MNIST, to avoid the lack of diversity due to mode collapse.
In \cite{jordon2018pate}, PATE-GAN demonstrated its effectiveness only for low-dimensional table data having tens of attributes.

In contrast, VAEs do not suffer from the problems of mode collapse and lack of diversity seen in GANs.
However, recently proposed VAEs under DP constraints are not sufficient.
A simple extension of VAE to satisfy DP is employing DP-SGD \cite{abadi2016deep}, which injects noise on stochastic gradients.
However, it also produces noisy samples (see Figure \ref{fig:dpvae_mnist}). 
It is due to that the learning process of VAE is also complicated.
DP-GM \cite{acs2018differentially} proposed a differentially private model based on VAE.
DP-GM employs a simplified process that first partitions data by $k$-means clustering and then trains disjoint VAEs for each partition.
As a result, DP-GM can craft samples with less noise, but these samples are close to the centroids of the clusters.
This means DP-GM causes mode collapse accompanied by breaking the diversity of samples so that it can generate clear samples (see Figure \ref{fig:dpgm_mnist}).
In other words, the generated data does not follow the actual distribution of data, which causes low performance for data mining tasks.
For example, DP-GM generates clear images in Figure \ref{fig:dpgm_mnist}, but the accuracy of a classifier trained with the generated images results in $0.49$.
In this paper, we study a differentially private generative model that generates \textit{diverse} samples; the generated data follows the actual distribution.

\subsection{Our Contributions}
In this paper, we propose a new generative model that satisfies DP, named \textit{privacy preserved phased generative model} (P3GM).
Using this model, we can publish the generated data in a way which meets the following requirements:
\begin{itemize}
    \item Privacy of each data holder is protected with DP.
    \item The original data can be high dimensional.
    \item The generated data approximates the actual distribution of original data well enough to preserve utility for data mining tasks.
\end{itemize}
Because of the above properties, we can use P3GM for sharing a dataset with sensitive data to untrusted third-party such as a data scientist to analyze the data while preserving privacy.

The novelty of our paper is the new generative model with two-phased training.
P3GM is based on VAE, which has an expressive power of various distributions for high dimensional data.
However, P3GM has more tolerance to the noise for DP than VAE.
Our training model is the encode-decoder model same as VAE, but the training procedure separates the VAE's end-to-end training into two phases: training the encoder and training the decoder with the fixed encoder, which increases the robustness against the noise for DP.
Training of the decoder becomes stable because of the fixed encoder.
We define objective functions for each training to maximize the likelihood of our model.
We show that if the optimal value is given in the training of the encoder, the decoder has the possibility to generate data that follows the actual distribution.
Moreover, we theoretically describe why our two-phased training works better than end-to-end training under DP.

Furthermore, we give a realization of P3GM and a theoretical analysis of its privacy guarantee.
To show that generated data preserve utility for data mining tasks, we conduct classification tasks using data generated by the above example with real-world datasets.
Our model outperforms state-of-the-art techniques~\cite{acs2018differentially,zhang2014privbayes} concerning the performances of the classifications under the same privacy protection level.

\subsection{Preview of Results}
Figure \ref{fig:crown_jewel} shows generated samples from (b) VAE \cite{kingma2013auto}, (c) VAE \cite{kingma2013auto} with DP-SGD \cite{abadi2016deep} (we call DP-VAE), (d) DP-GM \cite{acs2018differentially}, and (e) our proposed method \method.
All methods are trained from (a) the MNIST dataset, and (c), (d) and (e) satisfy ($1, 10^{-5}$)-DP.
Behind the non-private method (b), samples from DP-VAE (c) look very noisy.
Samples from DP-GM (d) are very fine, but it generates less diverse samples for each digit.
Our proposed method (e) shows less noise and well diverse samples than (c) and (d).
\method can generate images that are visually close to original data (a) and samples from the non-private model (b).
Detailed empirical evaluations with several data mining tasks are provided in the latter part of this paper.

\subsection{Related Works}

\Add{
For releasing private statistical aggregates of curated database, differential privacy is used for privatization mechanisms.
Traditionally, releasing count data (i.e., histograms) has been studied very well \cite{cormode2019constrained} \cite{kuo2018differentially} \cite{xu2013differentially}.
To release statistical outputs described by complex queries, several works addressed differentially private indexing \cite{sahin2018differentially} and query processing \cite{kotsogiannis2019privatesql} \cite{mcsherry2009privacy}.
By utilizing these querying systems tailored to DP, we can outsource data science to third parties.
However, on these systems, data analysts are forced to understand their limitations.
Our approach enables the analysts to generate data and use them freely as well as regular data analytical tasks.
}



\section{Preliminaries} \label{sec:prelim}

In this section, we briefly describe essential backgrounds to understand our proposals. 
First, we explain variational autoencoder (VAE), which is the base of our model. 
Second, we describe differential privacy (DP), which gives a rigorous privacy guarantee.
Finally, we introduce three techniques that we use in our proposed model: differentially private mechanisms for expectation-maximization (EM) algorithm, principal component analysis (PCA), and stochastic gradient descent (SGD).

Table \ref{notations} summerizes notations used in this paper.

\begin{table}[t]
 \centering
 \small
  \begin{tabular}{cl}
  \toprule
   Symbol & \multicolumn{1}{c}{Definition} \\
   \midrule
    $x,z$ & A variable of data and a latent variable.\\
    $\bf{X}=\{\rm{x}^{(i)}\}^N_{i=1}$ & A dataset where $\rm{x}$ is a data record and\\ & $N$ is the number of data records.\\
    $p_\theta(x)$ & A marginal distribution of $x$ parametrized by $\theta$.\\
    $p_\theta(z)$ & A marginal distribution of $z$ parametrized by $\theta$.\\
    $p_\theta(x|z)$ & A posterior distribution that we refer to as \\ & \textit{decoder} parametrized by $\theta$\\
    $\theta^\ast$ & The actual parameter of the generative model \\ & which generates the dataset.\\
    $q_\phi(z|x)$ & An approximate distribution of $p_\theta(x|z)$ \\ & parametrized by $\phi$. We refer to this as \textit{encoder}.\\
    $\mu_\phi(x), \sigma_\phi(x)$ & The mean and the variance of $q_\phi(z|x)$.\\
    $\tilde{x}$ & A variable generated by a generative model.\\
    $\alpha$ & The order of re\'nyi differential privacy.\\
    $f$ & A function of dimensionality reduction.\\
    $p_\theta^f(z)$ & A distribution of $f(x)$ where $x$ follows $p_\theta(x)$\\
    $r_\lambda(z)$ & A distribution which approximates $p_{\theta^\ast}^f(z)$ \\ & parameterized by $\lambda$.\\
   \bottomrule
   \vspace{3pt}
  \end{tabular}
 \caption{Table of Symbols.}
 \label{notations}
\end{table}

\subsection{Variational Autoencoder} \label{subsec:vae}

Variational autoencoder (VAE) \cite{kingma2013auto} assumes a latent variable $z$ in the generative model of $x$.
In VAE, we maxmize the marginal log-likelihood of the given dataset $\bf{X}=\{\rm{x}^{(i)}\}^N_{i=1}$.

\textbf{Variational Evidence Lower Bound.}
Introduction of an approximation $q_{\phi}(z|x)$ of posterior $p_{\theta}(z|x)$ enable us to construct variational evidence lower bound (ELBO) on log-likelihood $\log p_\theta(x)$ as
\begin{equation}
\label{equ:elbo}
\begin{split}
    \mathcal{L}_{\text{ELBO}}(x) &= \log p_\theta(x) - D_{\text{KL}}(q_\phi(z|x) || p_\theta(x|z)) \\
    &= \mathbb{E}_{q_\phi(z|x)}[\log p_\theta(x|z)] - D_{\text{KL}}(q_\phi(z|x) || p_\theta(z)) \\
    &\leq \log p_\theta(x) . \\
\end{split}
\end{equation}
$q_{\phi}(z|x)$ and $p_{\theta}(z|x)$ are implemented using a neural network and $\mathcal{L}_{\text{ELBO}}$ can be differentiable under a certain assumption so that we can optimize $\mathcal{L}_{\text{ELBO}}$ with an optimization algorithm such as SGD.

\textbf{Reparametrization Trick.}
To implement $q_{\phi}(z|x)$ and $p_{\theta}(z|x)$ as a neural network, we need to backpropagate through random sampling.
However, such backpropagation does not flow through the random samples.
To overcome this issue, VAE introduces the reparametrization trick for a sampling of a random variable $z$ following $\mathcal{N}(\mu, \sigma)$.
The trick can be described as $z = \mu + \sigma \epsilon$ where $\epsilon \sim \mathcal{N}(0, \mathbf{I})$.


\textbf{Random sampling.}
The generative process of VAE is as follows:
1) Choose a latent vector $z$. $z \sim \mathcal{N}(0, \mathbf{I})$, 
2) Generate $\Tilde{x}$ by decoding $z$. $\Tilde{x} \sim p_\theta(x|z)$.

\subsection{Differential Privacy}

Differential privacy (DP) \cite{dwork2006differential} is a rigorous mathematical privacy definition, which quantitatively evaluates the degree of privacy protection when we publish outputs.
The definition of DP is as follows:
\begin{definition}[($\varepsilon, \delta$)-differential privacy]
A randomized mechanism $\mathcal{M}:\mathcal{D}\rightarrow\mathcal{Z}$ satisfies ($\varepsilon, \delta$)-DP if, for any two input $D, D' \in \mathcal{D}$ such that $d_H(D,D')=1$ and any subset of outputs $Z \subseteq \mathcal{Z}$, it holds that
\begin{equation}
\nonumber
  \Pr[\mathcal{M}(D)\in Z] \leq \exp(\varepsilon) \Pr[\mathcal{M}(D')\in Z] + \delta .
\end{equation}
where $d_H(D,D')$ is the hamming distance between $D, D'$.
\end{definition}

Practically, we employ a randomized mechanism $\mathcal{M}$ that ensures DP for a function $m$.
The mechanism $\mathcal{M}$ perturbs the output of $m$ to cover $m$'s sensitivity that is the maximum degree of change over any pairs of dataset $D$ and $D'$.

\begin{definition}[Sensitivity]
The sensitivity of a function $m$ for any two input $D, D' \in \mathcal{D}$ such that $d_H(D,D')=1$ is:
\begin{equation}
\nonumber
    \Delta_{m} = \sup_{D, D' \in \mathcal{D}} \|m(D)-m(D')\|.
\end{equation}
where $||\cdot||$ is a norm function defined on $m$'s output domain.
\end{definition}
Based on the sensitivity of $m$, we design the degree of noise to ensure differential privacy.
Laplace mechanism and Gaussian mechanism are well-known as standard approaches.

\subsection{Compositions of Differential Privacy} \label{sec:prelim_comp}

Let $\mathcal{M}_1, \mathcal{M}_2, \dots, \mathcal{M}_k$ be mechanisms satisfying $\varepsilon_1$-, $\varepsilon_2$-, $\dots, \varepsilon_k$-DP, respectively.
Then, a mechanism sequentially applying $\mathcal{M}_1, \mathcal{M}_2 \dots, \mathcal{M}_k$ satisfies ($\sum_{i\in[k]} \varepsilon_i$)-DP.
This fact refers to \textit{composability} \cite{dwork2006differential}.

The sequential composition is not a tight solution to compute privacy loss.
However, searching its exact solution is \#P-hard \cite{murtagh2016complexity}.
Therefore, Discovering some lower bound of accounted privacy loss is an important problem for DP.
zCDP~\cite{bun2016concentrated} and moments accountant (MA)~\cite{abadi2016deep} are one of tight composition methods which give some lower bound.

R\'enyi Differential Privacy (RDP) also gives a tighter analysis of compositions for differentially private mechanisms \cite{mironov2017renyi}.
\begin{definition}
A randomized mechanism $\mathcal{M}:\mathcal{D}\rightarrow\mathcal{Z}$ satisfies ($\alpha, \varepsilon$)-RDP if, for any two input $D, D' \in \mathcal{D}$ such that $d_H(D,D')=1$, and the order $\alpha > 1$, it holds that
\begin{equation}
    \frac{1}{\alpha-1}\log {\mathbb{E}}_{z\sim \mathcal{M}(D^\prime)} \left( \frac{\Pr(\mathcal{M}(D)=z)}{\Pr(\mathcal{M}(D^\prime)=z)}\right)^\alpha\leq\varepsilon.
\end{equation}
\end{definition}
The compositions under RDP is known to be smaller than the sequential compositions.
For RDP, the following composition theorem holds \cite{mironov2017renyi}:
\begin{theorem}[composition theorem of RDP]
\label{theo:compositionRDP}
If randomized mechanisms $\mathcal{M}_1$ and $\mathcal{M}_2$ satisfy ($\alpha, \varepsilon_1$)-RDP and ($\alpha, \varepsilon_2$)-RDP, respectively, the combination of $\mathcal{M}_1$ and $\mathcal{M}_2$ satisfies ($\alpha, \varepsilon_1+\varepsilon_2$)-RDP.
\end{theorem}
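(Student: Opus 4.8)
The plan is to reduce the statement to the additivity of the order-$\alpha$ Rényi divergence over the joint output of the two mechanisms. Write $\mathcal{M}=(\mathcal{M}_1,\mathcal{M}_2)$ for the combined mechanism, whose output on input $D$ is the pair $(z_1,z_2)$. I want to show that the quantity inside the $(\alpha,\varepsilon)$-RDP definition for $\mathcal{M}$ is bounded by $\varepsilon_1+\varepsilon_2$. It is convenient to work with the exponentiated functional
\begin{equation}
\nonumber
M_\alpha(D,D') = \mathbb{E}_{z\sim\mathcal{M}(D')}\left(\frac{\Pr(\mathcal{M}(D)=z)}{\Pr(\mathcal{M}(D')=z)}\right)^\alpha,
\end{equation}
so that each hypothesis reads $M^{(i)}_\alpha(D,D')\le\exp((\alpha-1)\varepsilon_i)$, and the goal becomes $M_\alpha(D,D')\le\exp((\alpha-1)(\varepsilon_1+\varepsilon_2))$.

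First I would expand the joint probability. For fixed neighboring $D,D'$, factor $\Pr(\mathcal{M}(D)=(z_1,z_2)) = \Pr(\mathcal{M}_1(D)=z_1)\,\Pr(\mathcal{M}_2(D)=z_2\mid z_1)$ and likewise for $D'$; the conditioning on $z_1$ covers the case where the second release is run adaptively on the output of the first. Substituting into $M_\alpha$ and summing first over $z_2$ with $z_1$ held fixed, the summand splits into an outer ratio in $z_1$ times an inner moment in $z_2$.

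The key step is to bound the inner sum uniformly in $z_1$. For each fixed value of $z_1$, the conditional laws $\mathcal{M}_2(D)\mid z_1$ and $\mathcal{M}_2(D')\mid z_1$ are outputs of an $(\alpha,\varepsilon_2)$-RDP mechanism on neighboring inputs, so the inner moment is at most $\exp((\alpha-1)\varepsilon_2)$. Pulling this constant out of the sum over $z_1$ leaves exactly the order-$\alpha$ moment of $\mathcal{M}_1$, which the first hypothesis bounds by $\exp((\alpha-1)\varepsilon_1)$. Multiplying the two bounds gives $M_\alpha(D,D')\le\exp((\alpha-1)(\varepsilon_1+\varepsilon_2))$; taking $\tfrac{1}{\alpha-1}\log$, and noting the bound holds for every neighboring pair, yields the claimed $(\alpha,\varepsilon_1+\varepsilon_2)$-RDP guarantee.

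The one point that needs care — and the only real obstacle — is this uniformity in $z_1$: the RDP guarantee of $\mathcal{M}_2$ must hold for every auxiliary input $z_1$, which is exactly what lets the factor $\exp((\alpha-1)\varepsilon_2)$ leave the outer summation. In the non-adaptive case the argument collapses to the elementary fact that $\mathbb{E}_Q[(P/Q)^\alpha]$ is multiplicative over product distributions, and no uniformity assumption is needed.
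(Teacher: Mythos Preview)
Your argument is correct and is essentially the standard proof of RDP composition from Mironov~\cite{mironov2017renyi}. Note, however, that the paper does not supply its own proof of this theorem: it is stated in Section~\ref{sec:prelim_comp} as a known result, with a citation to~\cite{mironov2017renyi}, and is then invoked as a black box in the proof of Theorem~\ref{theo:comp_rdp}. So there is no in-paper proof to compare against; what you have written is the proof the cited reference gives, including the observation that the adaptive case requires the $(\alpha,\varepsilon_2)$-RDP bound on $\mathcal{M}_2$ to hold uniformly over the auxiliary input $z_1$.
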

Further, between RDP and DP, the following theorem holds: 
\begin{theorem}[relation between RDP and DP~\cite{mironov2017renyi}]
\label{theo:relationRDPDP}
If a randomized mechanism $\mathcal{M}$ satisfies ($\alpha, \varepsilon$)-RDP, for any $\alpha > 1$, $0<\delta<1$, $\mathcal{M}$ satisfies ($\varepsilon+\frac{\log 1/\delta}{\alpha-1}$,$\delta$)-DP.
\end{theorem}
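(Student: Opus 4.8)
The plan is to pass through the \emph{privacy loss random variable} and convert the R\'enyi-divergence bound into a high-probability tail bound, then translate that tail bound into the additive $(\varepsilon', \delta)$ guarantee. Throughout, fix neighboring inputs $D, D'$ with $d_H(D, D') = 1$, write $p(z) = \Pr(\mathcal{M}(D) = z)$ and $q(z) = \Pr(\mathcal{M}(D') = z)$, and set $\varepsilon' = \varepsilon + \frac{\log 1/\delta}{\alpha - 1}$.

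First I would restate the RDP hypothesis in exponentiated form: the bound in the definition is equivalent to
\begin{equation}
\nonumber
\mathbb{E}_{z \sim \mathcal{M}(D')}\left[\left(\frac{p(z)}{q(z)}\right)^{\alpha}\right] \leq e^{(\alpha-1)\varepsilon}.
\end{equation}
The useful step is a change of measure: since $q(z)(p(z)/q(z))^{\alpha} = p(z)(p(z)/q(z))^{\alpha-1}$, this expectation over $\mathcal{M}(D')$ equals $\mathbb{E}_{z \sim \mathcal{M}(D)}[(p(z)/q(z))^{\alpha-1}]$. So the hypothesis controls the $(\alpha-1)$-st moment of the likelihood ratio \emph{under} $\mathcal{M}(D)$, which is exactly the distribution we must reason about when bounding $\Pr[\mathcal{M}(D) \in Z]$.

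Next I would apply Markov's inequality to the nonnegative random variable $(p(z)/q(z))^{\alpha-1}$ under $z \sim \mathcal{M}(D)$. Writing the privacy loss as $L = \log(p(z)/q(z))$, and using $\alpha - 1 > 0$, the event $\{L > \varepsilon'\}$ coincides with $\{(p(z)/q(z))^{\alpha-1} > e^{(\alpha-1)\varepsilon'}\}$, so Markov's inequality together with the moment bound above gives
\begin{equation}
\nonumber
\Pr_{z \sim \mathcal{M}(D)}[L > \varepsilon'] \leq \frac{e^{(\alpha-1)\varepsilon}}{e^{(\alpha-1)\varepsilon'}}.
\end{equation}
By the choice $\varepsilon' = \varepsilon + \frac{\log 1/\delta}{\alpha-1}$ the exponent satisfies $(\alpha-1)\varepsilon' = (\alpha-1)\varepsilon + \log(1/\delta)$, so the right-hand side collapses to exactly $\delta$.

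Finally I would convert this tail bound into the DP inequality. For any output subset $Z \subseteq \mathcal{Z}$, I split $\Pr[\mathcal{M}(D) \in Z]$ over the events $\{L \leq \varepsilon'\}$ and $\{L > \varepsilon'\}$. On the first event $p(z) \leq e^{\varepsilon'} q(z)$ pointwise, which bounds its contribution by $e^{\varepsilon'}\Pr[\mathcal{M}(D') \in Z]$; the second contributes at most $\Pr_{z \sim \mathcal{M}(D)}[L > \varepsilon'] \leq \delta$. Adding the two yields $\Pr[\mathcal{M}(D) \in Z] \leq e^{\varepsilon'}\Pr[\mathcal{M}(D') \in Z] + \delta$, which is the claimed $(\varepsilon', \delta)$-DP. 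The only delicate point is the change of measure in the second paragraph, namely putting the moment into the form whose Markov threshold matches the targeted $e^{(\alpha-1)\varepsilon'}$; after that the remaining steps are mechanical. Since the RDP definition is assumed for every ordered neighboring pair, the argument is symmetric in $D$ and $D'$, so the bound holds in both directions and no separate case is needed.
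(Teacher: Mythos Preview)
Your argument is correct and is essentially the standard proof from Mironov~\cite{mironov2017renyi}: change of measure to get an $(\alpha-1)$-st moment bound on the likelihood ratio under $\mathcal{M}(D)$, Markov's inequality to turn that into a tail bound on the privacy loss, and the good-set/bad-set split to convert the tail bound into the $(\varepsilon',\delta)$ inequality. Note, however, that the paper does not give its own proof of this theorem at all; it is quoted as a background result from~\cite{mironov2017renyi} and used later in the proof of Theorem~\ref{theo:comp_rdp}, so there is no in-paper argument to compare against.
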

We can see that RDP is implicitly based on the notion of MA from the following theorem \cite{wang2018subsampled}.
\begin{theorem}[relation between RDP and MA]
\label{theo:relationRDP-MA}
In the notion of MA, the $\alpha^{th}$ moment of a mechanism $\mathcal{M}$ is defined as~\cite{abadi2016deep}: 
\begin{equation}
\nonumber
MA_{\mathcal{M}}(\alpha)\coloneqq\max_{D,D^\prime}\log\mathbb{E}_{z\sim M(D)}\exp{\left(\alpha\log\frac{\Pr(\mathcal{M}(D)=z)}{\Pr(\mathcal{M}(D^\prime)=z)}\right)}
\end{equation}
Then, the mechanism $\mathcal{M}$ satisfies $(\alpha+1, MA_\mathcal{M}(\alpha)/\alpha)$-RDP.
\end{theorem}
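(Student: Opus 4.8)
The plan is to show that the two quantities coincide exactly by a single change of the integrating measure, so the claimed RDP bound turns out to be tight rather than merely an inequality. First I would strip away the notational overhead in the definition of $MA_{\mathcal{M}}(\alpha)$: since $\exp(\alpha\log t)=t^\alpha$, the moment rewrites as $MA_{\mathcal{M}}(\alpha)=\max_{D,D'}\log \mathbb{E}_{z\sim\mathcal{M}(D)}[(p(z)/q(z))^\alpha]$, where I abbreviate $p(z)=\Pr(\mathcal{M}(D)=z)$ and $q(z)=\Pr(\mathcal{M}(D')=z)$. This puts both the MA and the RDP quantities into the common form of an integral of a power of the likelihood ratio, differing only in which of the two neighboring distributions serves as the base measure and in the exponent.

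The key step is the algebraic identity that absorbs one factor of the ratio while switching the base measure from $\mathcal{M}(D)$ to $\mathcal{M}(D')$:
\begin{equation}
\nonumber
\mathbb{E}_{z\sim\mathcal{M}(D)}\left[\left(\frac{p(z)}{q(z)}\right)^\alpha\right]
= \int \frac{p(z)^{\alpha+1}}{q(z)^\alpha}\,dz
= \mathbb{E}_{z\sim\mathcal{M}(D')}\left[\left(\frac{p(z)}{q(z)}\right)^{\alpha+1}\right].
\end{equation}
In words, an $\alpha$-th moment of the ratio taken under $\mathcal{M}(D)$ equals an $(\alpha+1)$-th moment taken under $\mathcal{M}(D')$, because the extra factor of $p/q$ exactly converts the measure $q$ back into $p$ inside the integral.

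With this identity in hand I would simply match the resulting expression against the RDP definition at order $\alpha+1$. That definition carries the normalization $1/((\alpha+1)-1)=1/\alpha$ and uses the base measure $\mathcal{M}(D')$, so for every neighboring pair $(D,D')$ we obtain $\frac{1}{\alpha}\log\mathbb{E}_{z\sim\mathcal{M}(D')}[(p/q)^{\alpha+1}]=\frac{1}{\alpha}\log\mathbb{E}_{z\sim\mathcal{M}(D)}[(p/q)^\alpha]$, which is exactly the order-$(\alpha+1)$ R\'enyi divergence of $\mathcal{M}(D)$ from $\mathcal{M}(D')$. Taking the maximum over neighboring pairs turns the right-hand side into $MA_{\mathcal{M}}(\alpha)/\alpha$, and since this upper-bounds the divergence for every pair, $\mathcal{M}$ satisfies $(\alpha+1, MA_{\mathcal{M}}(\alpha)/\alpha)$-RDP.

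Because each step is an exact equality, I do not anticipate a genuine obstacle; the only points demanding care are bookkeeping ones. Specifically, I must correctly line up the off-by-one between the MA order $\alpha$ and the RDP order $\alpha+1$ (together with the corresponding shift $\alpha-1\to\alpha$ in the RDP normalizing constant), and confirm that the RDP expectation is taken under $\mathcal{M}(D')$ while the MA expectation is taken under $\mathcal{M}(D)$ — precisely the discrepancy that the change-of-measure identity reconciles. A minor technical caveat worth flagging is that the manipulation implicitly assumes absolute continuity of $\mathcal{M}(D)$ with respect to $\mathcal{M}(D')$ so that the ratio and integrals are well defined; otherwise both sides diverge to $+\infty$ and the statement holds vacuously.
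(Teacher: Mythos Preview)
Your argument is correct: the change-of-measure identity $\int p\,(p/q)^\alpha = \int q\,(p/q)^{\alpha+1}$ is exactly what links the $\alpha$-th moment of the privacy-loss random variable (under $\mathcal{M}(D)$) to the R\'enyi divergence of order $\alpha+1$ (under $\mathcal{M}(D')$), and the off-by-one bookkeeping between the MA exponent and the RDP normalization $1/((\alpha+1)-1)=1/\alpha$ is handled correctly.

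Note, however, that the paper does not supply its own proof of this statement. Theorem~\ref{theo:relationRDP-MA} is stated in the preliminaries with a citation to~\cite{wang2018subsampled} and is used later (in the proof of Theorem~\ref{theo:comp_rdp}) as a black box to translate the MA bounds for DP-SGD and DP-EM into RDP guarantees. There is therefore nothing to compare your derivation against in the paper itself; your proof is the standard one-line computation underlying the cited result, and it is complete as written. The only minor remark is that the maximum in the MA definition should be read as ranging over neighboring pairs (the paper leaves this implicit), which you use when matching against the ``for all neighboring $D,D'$'' clause in the RDP definition.
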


\subsection{Differentially Private Mechanisms}
\label{subsec:dpmethods}

Here we introduce several existing techniques used in our proposed method.
We explain DP-EM~\cite{park2017dp} and privacy preserving PCA~\cite{dwork2014analyze} and DP-SGD~\cite{abadi2016deep}.

\textbf{DP-EM: Mixture of Gaussian.}
DP-EM~\cite{park2017dp} is the expectation-maximization algorithm satisfying differential privacy. 
DP-EM is a very general privacy-preserving EM algorithm which can be used for any model with a complete-data likelihood in the exponential family. 
They introduced the Gaussian mechanism in the M step so that the inferred parameters satisfy differential privacy. 
We assume that $p(x)$ follows mixture of Gaussian $p(x;\boldsymbol{\pi}, \boldsymbol{\mu}, \boldsymbol{\Sigma})=\Sigma^K_{k=1}\pi_k\mathcal{N}(x_i;\mu_k,\Sigma_k)$, where $\Sigma^K_{k=1}\pi_k=1$ and $K$ is the number of Gaussians, and we use DP-EM algorithm to estimate parameters of it while guaranteeing differential privacy. 
Let $\{\boldsymbol{\pi}, \boldsymbol{\mu}, \boldsymbol{\Sigma}\}=\{\pi_k, \mu_k, \Sigma_k\}^K_{k=1}$ denote the parameters.
Then, the M step, where parameters are updated, is as follows.
\begin{equation}
\nonumber
    \boldsymbol{\Tilde{\pi}}=\boldsymbol{\pi}+(Y_1,...,Y_K); \ \Tilde{\Sigma}_k=\Sigma_k+Z; \ \Tilde{\mu}_k=\mu_k+(Y_1,...,Y_d)
\end{equation}
where $\boldsymbol{\pi},\Sigma_k$ and $\mu_k$ are derived with the maximum likelihood estimation in its iteration. 
$Y$ and $Z$ are the Gaussian noise for differential privacy. 
The noise is scaled with the sensitivity of their parameters. 
By adding this noise, each iteration satisfies $(\varepsilon_i, \delta_i)$-DP.
When the sensitivity is $1$\footnote{We can guarantee that the sensitivity is less than $1$ by using a technique called clipping which is described at DP-SGD in Section \ref{subsec:dpmethods}}, the upper bound of $\alpha^{th}$ moment of DP-EM of each step is as follows \cite{park2017dp}:
\begin{equation}
\label{equ:ma-dp-em}
    MA_{\text{DP-EM}}(\alpha)\leq(2K+1)(\alpha^2+\alpha)/(2\sigma_e^2)
\end{equation}
where $\sigma_e$ is the parameter which decides the scale of the noise.

\textbf{Privacy preserving PCA.}
Privacy-preserving principal component analysis (PCA)~\cite{dwork2014analyze} is the mechanism for the PCA with differential privacy.
The method follows the Gaussian mechanism.
Privacy-preserving PCA satisfies $(\varepsilon,\delta)$-differential privacy by adding noise to the covariance matrix $A$ as following:
\begin{equation}
\nonumber
    \hat{A}=A+E; 
\end{equation}
where $E$ is a symmetric noise matrix, with each (upper-triangle) entry drawn i.i.d. from the Gaussian distribution with scale $\sigma_p$.
In this paper, DP-PCA denotes this PCA.
Since the $l_2$-sensitivity of $A$ is $1$ when the $l_2$-norm of the data is upper bounded by $1$, DP-PCA satisfies $(\alpha, \alpha/(2\sigma_p^2))$-RDP from \cite{mironov2017renyi}.
We can guarantee this by the clipping technique as the case of DP-EM.

\textbf{DP-SGD.}
Differentially private stochastic gradient descent \cite{abadi2016deep}, well known as DP-SGD, is a useful optimization technique for training various models, including deep neural networks under differential privacy.
SGD iteratively updates parameters of the model $\theta$ to minimize empirical loss function $\mathcal{L}(\theta)$.
At each step, we compute the gradient $\mathbf{g}_{\rm{x}}=\nabla_\theta \mathcal{L}(\theta, \rm{x})$ for a subset of examples which is called a batch.
\Add{
However, the sensitivity of gradients is infinity, so to limit the gradient's sensitivity, DP-SGD employs the gradient clipping.
The gradient clipping $\psi_C$ limits the sensitivity of the gradient as bounded up to a given clipping size $C$.
Based on the clipped gradients, DP-SGD crafts a randomized gradient $\Tilde{\mathbf{g}}$ through computing the average over the clipped gradients and adding noise whose scale is defined by $C$ and $\sigma_s$, where $\sigma_s$ is noise scaler to satisfy $(\varepsilon,\delta)$-DP.
At last, DP-SGD takes a step based on the randomized gradient $\Tilde{\mathbf{g}}$.
DP-SGD iterates this operation until the convergence, or the privacy budget is exhausted.
}

Abadi et al.~\cite{abadi2016deep} also introduced moment accountant (MA) to compute privacy composition tightly.
The upper bound of the $\alpha^{th}$ moment of DP-SGD of each step is proved by Abadi et al.~\cite{abadi2016deep} as follows:
\begin{equation}
\label{equ:ma-dp-sgd}
\begin{split}
MA_{\text{DP-SGD}}(\alpha)\leq \frac{s^2\alpha(\alpha-1)}{(1-s)\sigma_s^2}+\sum_{t=3}^{\lambda+1}\{\frac{(2s)^t(t-1)!!}{2(1-s)^{t-1}\sigma_s^t} +\\ \frac{s^t}{(1-s)^t\sigma_s^{2t}} + \frac{(2s)^t \exp{((t^2-t)/2\sigma_s^2)(\sigma_s^t(t-1)!! + t^t)}}{2(1-s)^{t-1}\sigma_s^{2t}}\}
\end{split}
\end{equation}
where $!!$ represents the double factorial and $s$ is a sampling probability: the probability that a batch of DP-SGD includes one certain data. 
In this paper, we assume that a batch is made by uniformly sampling each data, and the batch is uniformly chosen, so $s$ is $B/N$.

\section{Problem Statement}

\Add{Suppose} dataset 
$\rm{\bf{X}}$=$\{x^{(\it{i})}\}^{\it{N}}_{\it{i}=1}$ consisting of $N$ i.i.d. samples of some continuous or discrete variable $x$.
We assume that the distribution of the data $x$ is parameterized by some parameter $\theta$; each data $\rm{x}$ is sampled from $p_{\theta^\ast}(x)$ where $\theta^\ast$ is the parameter which generates the dataset.
Since the actual parameter $\theta^\ast$ includes information of the dataset $\rm{\bf{X}}$, we publish the parameter $\theta^\ast$ instead of the dataset for privacy protection.
However, the actual parameter is hidden, so we need to train the parameter using the dataset.
Moreover, this trained parameter includes private information; an adversary may infer the individual record from the trained parameter.
Then, in this paper, we consider the way of training the parameter $\theta$ with DP.

Auto-Encoding Variational Bayes (AEVB)~\cite{kingma2013auto} algorithm is a general algorithm for training a generative model that assumes a latent variable $z$ in the generative process.
In this algorithm, a distribution $q_\phi(z|x)$ which approximates $p_\theta(z|x)$ is introduced to derive $\mathcal{L}_{\text{ELBO}}$ (Equation (\ref{equ:elbo})), and parameters $\phi$ and $\theta$ are iteratively updated by an optimization method such as stochostic gradient descent (SGD) to minimize $\mathcal{L}_{\text{ELBO}}$.
VAE is one of the models that use neural networks for $q_\phi(z|x)$ and $p_\theta(x|z)$ and becoming one of the most popular generative models due to its versatility and expressive power.
Then, the naive approach, which we call DP-VAE, for our problem is to publish $\theta$ of VAE trained by the AEVB algorithm with DP-SGD as the optimization method.
Although DP-VAE satisfies DP, we empirically found that $\theta$ trained by DP-VAE was not enough for an alternative of the original dataset, as shown in Figure \ref{fig:crown_jewel}(c).
This is because the objective function of VAE (Equation (\ref{equ:elbo})) is too vulnerable to the noise of DP-SGD to train $\theta$. 
Therefore, we introduce a new model tolerable to the noise, which we call Privacy-Preserving Phased Generative Model (P3GM).
\section{Proposed Method}

We here propose a new model, named \textit{phased generative model} (PGM), and we call its differentially private version \textit{privacy-preserved PGM} (P3GM). 
PGM has theoretically weaker expressive power than VAE but has a tolerance to the noise for DP-SGD.

Section \ref{sec:overview} gives an overview of PGM.
In Section \ref{sec:encodetrain} and \ref{sec:decodetrain}, we describe each phase of our two-phase training, respectively.
In Section \ref{sec:example}, we introduce an example of P3GM.
Section \ref{sec:synthesis} gives us how to sample synthetic data from P3GM.
In Section \ref{sec:privloss}, we give the proof of privacy guarantee of the example of P3GM by introducing the tighter method of the composition of DP.


\begin{figure}[t]
    \centering
    \subfloat[VAE]{
        \includegraphics[width=\hsize]{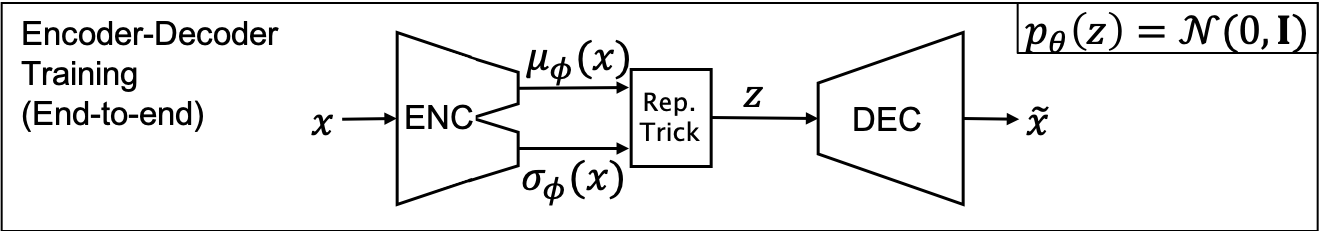}
    }\\
    \subfloat[P3GM]{
        \includegraphics[width=\hsize]{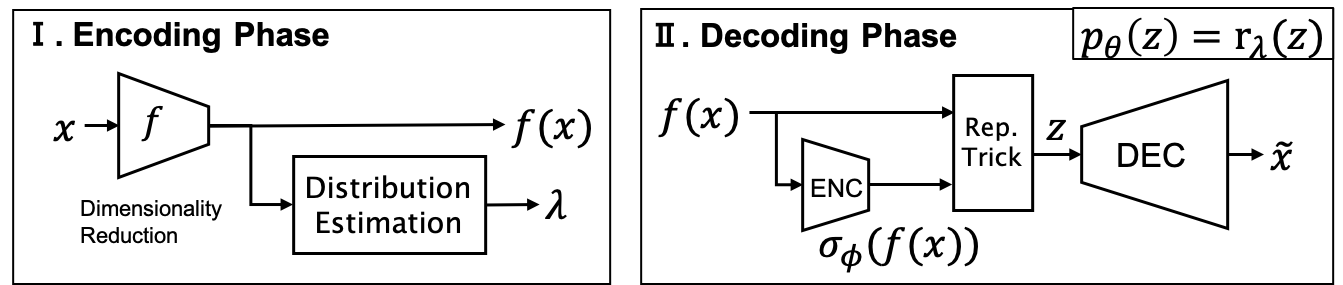}
    }
    \caption{Model architectures of VAE and P3GM.}
    \label{fig:model_overview}
\end{figure}

\newcommand{\encodetrain}{Encoding Phase}
\newcommand{\decodetrain}{Decoding Phase}

\subsection{Overview}
\label{sec:overview}

The generative model of PGM follows the same process as of VAE;
first, latent variable $z$ is generated from prior distribution $p_\theta(z)$.
Second, data $x$ is generated from posterior distribution $p_\theta(x|z)$.
Then, we introduce an approximation $q_\phi(z|x)$ of $p_\theta(z|x)$ to derive $\mathcal{L}_\text{ELBO}$ (Equation (\ref{equ:elbo})), which enables maximization of the likelihood of the given dataset. 
We will refer to $q_\phi(z|x)$ as a probabilistic \textit{encoder} because $q_\phi(z|x)$ produces the distribution over the space of $z$ given data $x$.
In a similar vein, we will refer to $p_\theta(x|z)$ as a probabilistic \textit{decoder} because $p_\theta(x|z)$ produces the distribution over the space of $x$ given a latent variable $z$.
In this paper, we assume that the encoder and the decoder produce a Gaussian distribution for the reparametrization trick and the tractability.
The main difference is in its training process shown in Figure \ref{fig:model_overview} comparing with VAE.
\Add{
Intuitively, PGM first uses a dimensional reduction $f$ instead of the embedding of VAE, which we call {\encodetrain}.
Then, PGM trains the decoder using the fixed encoder by $f$, which we call {\decodetrain}.
}
In PGM, we assume a distribution for $p_\theta(z)$ different from VAE to fix a part of parameters trained in {\decodetrain}.
Through {\encodetrain}, we can partially fix the encoder used in {\decodetrain}.
Then, we train the other parameters with the fixed encoder following the AEVB algorithm in {\decodetrain}.
The fixed encoder makes the AEVB algorithm stable even if we replace SGD with DP-SGD.
This stability is the advantage of our two-phased model.

\subsection{\encodetrain}
\label{sec:encodetrain}

Through {\encodetrain}, a part of parameters, concretely, $\mu_\phi(x)=\mathbb{E}[q_\phi(z|x)]$, becomes fixed.
In other words, {\encodetrain} partially fixes the encoder.
Here, we explain how to fix the parameter before {\decodetrain}.

The encoder's purpose is to encode original data to the latent space so that the decoder can decode the encoded data to the original data.
Another purpose of the encoder is to encode the data to a latent variable that follows some distribution so that the decoder can learn to decode the latent variable.
Therefore, we can fix the encoder by finding an encoder achieving these two purposes.

First, we describe the ideal but unrealistic assumption to make it easy to understand PGM.
The assumption is that the encoder encodes the data to the same data, which means that the encoded data is following \Add{the true distribution} $p_{\theta^\ast}(x)$.
Since the encoded data includes the same information as the original data and follows $p_{\theta^\ast}(x)$, this encoder satisfies the above two purposes.
Thus, we can fix the encoder as this.
Then, we train the decoder using the fixed encoder in {\decodetrain} while assuming that the latent variable is following $p_{\theta^\ast}(x)$.
In other words, we assume that \Add{the distribution of the latent variable $z$} $p_{\theta}(z)$ is identical to  $p_{\theta^\ast}(x)$.
We note that when the decoder is $p_{\theta}(x=\rm{x}|\it{z}=\rm{x})=1$, it holds that $p_{\theta}(x)=p_{\theta^\ast}(x)$.
This means that sampling $z$ from $p_{\theta^\ast}(x)$ and decoding $z$ to $x$ with the decoder $p_{\theta}(x|z)$, we can generate data which follows the actual distribution $p_{\theta^\ast}(x)$.

However, since \Add{the actual parameter} $\theta^\ast$ is not observed and is intractable, we cannot estimate $\theta^\ast$, encode to $p_{\theta^\ast}(x)$, and sample $z$ from $p_{\theta^\ast}(x)$.
Therefore, we approximate $p_{\theta^\ast}(x)$ by some tractable distribution to enable estimation, encoding, and sampling.
However, due to the curse of dimensionality, it is hard to infer the parameter for high dimensional data that we want to tackle.
Then, to solve this issue, we introduce a dimensionality reduction $f:\mathbb{R}^d\to\mathbb{R}^{d^\prime}$ where $d$ and $d^\prime$ are original and reduced dimensionality, respectively.
We let $p^f_{\theta}(z)$ denote the distribution of $z=f(x)$ where $x$ follows $p_{\theta}(x)$.
Then, $r_{\lambda}(z)$ denotes the approximation of $p^f_{\theta^\ast}(z)$ by some tractable distribution such as mixture of Gaussian (MoG). 
We fix the encoder to encode $x$ to data which follows $r_{\lambda}(z)$ by estimating the parameter $\lambda$.

As described above, the encoder's purpose is to encode the data so that the decoder can decode the encoded data to the original data.
From this purpose, the objective function for dimensionality reduction $f$ can be defined as follows:
\begin{equation} 
\label{equ:error_dimensional_reduction}
    \min_{f,g} \mathbb{E}_{x\sim p_{\theta^*}(x)}[\|x-g(f(x))\|^2_2]
\end{equation}
where $g$ represents a reconstruction function of $f$.
Intuitively, if there is a function $g$ where this value is small, data encoded by $f$ has the potential to be decoded to the original data.
Conversely, if this is large, the decoder will not be able to decode the encoded data to the original data.

In this model, we assume \Add{that the mean of the true distribution of the latent variable $z$ given $\rm{x}$ is $f(\rm{x})$, i.e., $\mathbb{E}[p_\theta(z|x)]=f(x)$.} 
Intuitively, this assumption means that PGM assumes that data $\rm{x}$ is generated from data whose dimensionality is reduced by $f$, i.e., $f(\rm{x})$.
This assumption enables the fixing of the encoder as $\mu_\phi(\rm{x})=\it{f}(\rm{x})$, because \Add{the encoder} is the approximation of $p_\theta(z|x)$.

\Add{Next, we estimate the distribution (i.e., $p^f_{\theta^*}(z)$) of the latent variable $z$ which is following the above generative process to feed and train the decoder with the estimated distribution.
Since the estimated distribution $r_\lambda(z)$ should approximate the distribution well, the objective function to obtain the optimal $\lambda$ can be defined as follows:
}
\begin{equation}
\label{equ:mle_lambda}
    \min_\lambda D_{\text{KL}}(p^f_{\theta^*}(z)||r_\lambda(z))
\end{equation}
where $D_{\text{KL}}$ represents the Kullback–Leibler divergence.
We consider the ideal case where there are a dimensionality reduction $f$, a reconstruction function $g$ and approximation $r_\lambda(z)$, which satisfies that Equation (\ref{equ:error_dimensional_reduction}) and Equation (\ref{equ:mle_lambda}) are $0$.
In this case, if the decoder can emulate $g$ (e.g., above example) by training, it holds that $p_\theta(x)=p_{\theta^\ast}(x)$, which means that the PGM generates data which follows the actual distribution $p_{\theta^\ast}(x)$.

We note that the variance $\sigma_\phi(x)$ of the decoder $q_{\phi}(z|x)$ is not fixed in {\decodetrain}, which means that we simultaneously train a part of the encoder for the encoder to approximate $p_{\theta^\ast}(z|x)$.

The dimensionality reduction and estimation of \Add{the distribution of the latent variable (i.e., $\lambda$)} cause privacy leak.
However, by guaranteeing DP for each component, PGM satisfies DP from the composition theorem (we refer to Section~\ref{sec:prelim_comp}).

\subsection{\decodetrain}
\label{sec:decodetrain}

We optimize the rest of the parameters of the encoder and the decoder following the AEVB algorithm. 
Here, we explain how to optimize the parameters.
$\mathcal{L}_\text{ELBO}$ on log-liklihood $\log p_\theta(x)$, which was explained in Section \ref{subsec:vae}, is approximated by a technique of Monte Carlo estimates.
\begin{equation}
\label{eq:elbo_approx}
\begin{split}
   \mathcal{L}_{\text{ELBO}}(\rm{x}) \approx &\frac{1}{L}\sum^{L}_{l=1}\log p_\theta(x=\rm{x}|\it{z}=z_{i,l}) \\
   &- D_{\text{KL}}(\textit{q}_\phi(\it{z}|\it{x}=\rm{x}) \| \textit{p}_\theta(\it{z})) 
\end{split}
\end{equation}
where $L$ is the number of iterations for Monte Carlo estimates and $z_{i,l}$ is sampled from \Add{the encoder with $\rm x$} using the reparametrization trick (we refer to Section~\ref{subsec:vae}).
If $\mathcal{L}_\text{ELBO}$ is differentiable, we can optimize parameters w.r.t. $\mathcal{L}_\text{ELBO}$ using SGD.
The first term is differentiable when we assume that $p_\theta(x|z)$ is a Bernoulli or Gaussian MLP depending on the type of data we are modeling.
Since we assume that $p_\theta(z)$ is identical to $r_\lambda(z)$, we need to choose a model for $r_\lambda(z)$ which makes the second term differentiable.

\subsection{Example of P3GM}
\label{sec:example}

We introduce a concrete realization of the privacy-preserved version of PGM, i.e., P3GM. 
Same as VAE, P3GM uses neural networks for $p_\theta(x|z)$ and $q_\phi(z|x)$.
The neural networks output the mean and variance of the distributions.

\textbf{{\encodetrain}:}
We first describe how to estimate \Add{the distribution of the latent variable} in a differentially private way.
First, we need to decide the model of the distribution.
The requirements are as follows:
\begin{enumerate}
\item The second term of Equation (\ref{eq:elbo_approx}) can be analytically calculated and is differentiable.
\item The objective function (\ref{equ:mle_lambda}) is small enough to approximately express the true distribution $p_{\theta^\ast}^f(z)$.
\item We can estimate \Add{the distribution} with DP.
\end{enumerate}
\Add{
The most simple prior distribution which satisfies the requirements is Gaussian.
However, depending on the data type, Gaussian may not be enough to approximate the true distribution (requirement 2).
Then, we introduce MoG because MoG can preserve the local structure of the data distribution more than Gaussian.
}
That is,
\begin{equation}
\nonumber
    r_\lambda(z)=\text{MoG}(z; \lambda)
\end{equation}
When we approximate the expectation in the KL term of Equation (\ref{equ:mle_lambda}) by the average of all given data, we can formulate the objective function as follows: 
\begin{equation}
\nonumber
    \max_\lambda \Pi_{i=1}^N r_\lambda(f(\rm{x}^{(i)}))
\end{equation}
This is the same as the objective function of the maximum likelihood estimation, so we can use EM-algorithm for the estimation of the parameter of MoG.
Further, EM-algorithm can satisfy DP by adding Gaussian noise (requirement 3), which we introduced as DP-EM \cite{park2017dp} in Section \ref{subsec:dpmethods}.

KL divergence between two mixture of Gaussian $g(\pi_a,\mu_a,\sigma_a)$ and $h(\pi_b,\mu_b,\sigma_b)$ can be approximated as follows~\cite{hershey2007approximating}:
\begin{equation}
\nonumber
\begin{split}
    &D_{\text{KL}}(g|h)\approx\\&\sum_a\pi_a\log{\frac{\Sigma_{a^\prime}\pi_{a^\prime}\exp{(-D_{\text{KL}}(\mathcal{N}(\mu_{a^\prime}, \sigma_{a^\prime})\|\mathcal{N}(\mu_{a},\sigma_{a})))}}{\Sigma_{b}\pi_{b}\exp{(-D_{\text{KL}}(\mathcal{N}(\mu_a, \sigma_a)\|\mathcal{N}(\mu_b, \sigma_b)))}}}
\end{split}
\end{equation}
Therefore, we can analytically calculate the second term of (\ref{eq:elbo_approx}) using this approximation (requirement 1).

\Add{To avoid the curse of dimensionality in the estimation of GMM, we utilize the dimensional reduction technique.}
In a dimensionality reduction, we aim to minimize the objective function (\ref{equ:error_dimensional_reduction}) with DP.
We approximate it by the average of all given data.
\begin{equation}
\nonumber
    \sum^N_{x_i}\frac{1}{N}\|x_i-g(f(x_i))\|^2_2
\end{equation}
When $f$ is a linear transformation which is useful for DP, this is optimized by PCA.
As described in Section \ref{sec:prelim}, PCA can satisfy DP (DP-PCA).
Therefore, we introduce DP-PCA as a dimensionality reduction.

\textbf{{\decodetrain}:}
As described above, since the $\mathcal{L}_\text{ELBO}$ is differentiable, we can optimize parameters with DP-SGD.
We show the pseudocode for \method in Algorithm~\ref{alg:p3gm}.
We refer to the detail of DP-SGD in Section~\ref{subsec:dpmethods}.
In Algorithm~\ref{alg:p3gm}, all parameters $\theta$ and $\phi$ are packed into $\theta$, for simplicity.

\SetKwInOut{Parameter}{Parameter}
\begin{algorithm}[t]
\caption{\method}         
\label{alg:p3gm}
\DontPrintSemicolon
\KwIn{$\mathbf{x}_1, \dots, \mathbf{x}_N \in \bf{X}$, $\boldsymbol{\mu}_X$}
\KwOut{The model parameters $\theta_T$, the parameter of MoG $\lambda$ }
\Parameter{reduced dimension $d^\prime$, privacy parameters in PCA, EM and SGD $\sigma_{p}, \sigma_e, \sigma_s$,
learning rate $\eta_t$, batch size $B$, gradient norm bound $C$}
$\Sigma$ $\leftarrow$ DP-PCA($\mathbf{x}_1, \dots, \mathbf{x}_N$;$d^\prime$, $\sigma_{p}$)\;
$\hat{\mathbf{x}}_1, \dots, \hat{\mathbf{x}}_N$ $\leftarrow$ dimensional reduction($\bf{X}, \boldsymbol{\Sigma}, \boldsymbol{\mu}_X$)\; 
$\lambda$ $\leftarrow$ DP-EM($\hat{\mathbf{x}}_1, \dots, \hat{\mathbf{x}}_N$;$\sigma_e$)\;
$p_\theta(z) \leftarrow \text{MoG}(z;\lambda)$\;
\For{$t\in [T]$}{
    Take a random batch $B_{t}$ w/ sampling probability $B/N$\;
    \textbf{Compute gradient}\;
    For each $(\mathbf{x}_b, \mathbf{\hat{x}}_b) \in [B_t]$ compute $\mathbf{g}_t(\mathbf{x}_b, \mathbf{\hat{x}}_b) \leftarrow \nabla_{\theta_t} \mathcal{L}_\text{ELBO}(\theta_t, \mathbf{x}_b, \mathbf{\hat{x}}_b, \lambda)$\;
    \textbf{Add noise and descent}\;

    $\Tilde{\mathbf{g}}_t \leftarrow \frac{1}{B}(\sum_{b \in [B_t]} \psi_C(\mathbf{g}_t(\mathbf{x}_b, \mathbf{\hat{x}}_b)) + \mathcal{N}(0,\sigma_s^2C^2\textbf{I}))$\;
    $\theta_{t+1} \leftarrow \theta_t - \eta_t \Tilde{\mathbf{g}}_t$\;
}
\Return{$\theta_T$, $\lambda$}
\end{algorithm}

\subsection{Data Synthesis using P3GM} \label{sec:synthesis}
\label{subsubsec:sample}
The data synthesis of our model follows the two steps below:
\begin{enumerate}
    \item Choose a latent vector $z$. $z \sim MoG(z;\lambda)$
    \item Generate $\Tilde{x}$ by decoding $z$. $\Tilde{x} \sim p_\theta(x|z)$.
\end{enumerate}
It is worth noting that since MoG approximates the distribution of real data, we can generate data in a similar mixing ratio of real data. 
By utilizing our model, we can share privatized data by releasing the model that satisfies DP.
Due to the post-processing properties of DP, sampled data from the model with random seeds do not violate DP.

\subsection{Privacy Analysis} \label{sec:privloss}

As we described in Section~\ref{subsec:dpmethods}, \method consumes privacy budgets at three steps: PCA, EM-algorithm, and SGD, and we introduced the differentially private methods independently. 
We can simply compute the privacy budget for each component by zCDP and MA, as described in the corresponding paper, and we can adopt sequential composition for the three components as the baseline.

Beyond this simple composition, we here follow RDP to rigorously compute the composition, and meet the following theorem.
\begin{theorem}\label{theo:comp_rdp}
\method satisfies ($\varepsilon$, $\delta$)-DP, for any $0<\delta<1$, $\alpha>1$, such that:
\begin{equation}
    \varepsilon\leq\varepsilon_p(\alpha)+T_s\varepsilon_{rs}(\alpha)+T_e\varepsilon_{re}(\alpha)+\frac{\log 1/\delta}{\alpha-1} .
    \label{theo:rdp_comp}
\end{equation}
where $\varepsilon_p(\alpha)=\alpha/(2\sigma_p^2)$, $\varepsilon_{rs}(\alpha) = MA_{\text{DP-SGD}}(\alpha-1)/(\alpha-1)$, $\varepsilon_{re}(\alpha) = MA_{\text{DP-EM}}(\alpha-1)/(\alpha-1)$, and, $T_s$ and $T_e$ are the number of iterations in DP-SGD and DP-EM, respectively. We refer to Section \ref{subsec:dpmethods} for the definition of $MA_{\text{DP-SGD}}$ and $MA_{\text{DP-EM}}$.
\end{theorem}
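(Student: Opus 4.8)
The plan is to reduce the statement to a routine application of the R\'enyi differential privacy (RDP) machinery developed in Section~\ref{sec:prelim_comp}, by viewing \method as a sequential composition of three differentially private primitives: a single call to DP-PCA, $T_e$ calls to the noisy M-step of DP-EM, and $T_s$ calls to the noisy gradient step of DP-SGD. The data synthesis stage (Section~\ref{sec:synthesis}) acts only on the released parameters, so by post-processing it consumes no additional budget and can be ignored.

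First I would pin each primitive to a common RDP order $\alpha$. For DP-PCA, the Gaussian mechanism on the covariance matrix has $\ell_2$-sensitivity $1$ once the data are clipped to unit norm, so it satisfies $(\alpha, \alpha/(2\sigma_p^2))$-RDP, i.e.\ $(\alpha, \varepsilon_p(\alpha))$-RDP. For a single DP-EM step, I would start from the moment bound in Equation~(\ref{equ:ma-dp-em}) and invoke Theorem~\ref{theo:relationRDP-MA}: a mechanism whose $\alpha'$-th moment is $MA(\alpha')$ is $(\alpha'+1, MA(\alpha')/\alpha')$-RDP, so substituting $\alpha' = \alpha-1$ yields $(\alpha, MA_{\text{DP-EM}}(\alpha-1)/(\alpha-1))$-RDP $=(\alpha, \varepsilon_{re}(\alpha))$-RDP. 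The identical argument applied to Equation~(\ref{equ:ma-dp-sgd}) gives each DP-SGD step $(\alpha, \varepsilon_{rs}(\alpha))$-RDP.

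Next I would glue the pieces together. Applying the RDP composition theorem (Theorem~\ref{theo:compositionRDP}) repeatedly at the fixed order $\alpha$ accumulates the individual RDP parameters additively, so the entire \method pipeline satisfies $(\alpha, \varepsilon_p(\alpha) + T_e\varepsilon_{re}(\alpha) + T_s\varepsilon_{rs}(\alpha))$-RDP. Converting back to $(\varepsilon,\delta)$-DP through Theorem~\ref{theo:relationRDPDP} then adds the term $\tfrac{\log 1/\delta}{\alpha-1}$, producing exactly the bound claimed in Equation~(\ref{theo:rdp_comp}).

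The main obstacle is bookkeeping the order shift. The MA-to-RDP conversion maps moment order $\alpha-1$ to RDP order $\alpha$, whereas the DP-PCA guarantee and the terminal RDP-to-DP conversion are naturally phrased at order $\alpha$; one must therefore be careful to evaluate $MA_{\text{DP-EM}}$ and $MA_{\text{DP-SGD}}$ at $\alpha-1$ so that all three primitives are expressed at the \emph{same} RDP order before composition. Once that alignment is enforced, the composition theorem applies verbatim and the remaining steps are immediate; a minor point worth checking is that the clipping operations (unit-norm clipping for PCA and EM, gradient clipping $\psi_C$ for SGD) indeed deliver the sensitivity bounds those moment expressions assume.
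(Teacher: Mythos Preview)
Your proposal is correct and mirrors the paper's own proof essentially step for step: both establish an RDP guarantee at a common order $\alpha$ for each of DP-PCA, DP-EM, and DP-SGD (the latter two via Theorem~\ref{theo:relationRDP-MA} with the $\alpha-1$ shift you highlight), compose via Theorem~\ref{theo:compositionRDP}, and finish with Theorem~\ref{theo:relationRDPDP}. Your additional remarks on post-processing and clipping are sound refinements but do not change the argument.
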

\begin{proof}
We consider RDP for each component of \method.
First, as described at DP-PCA in Section~\ref{subsec:dpmethods}, DP-PCA satisfies $(\alpha, \alpha/(2\sigma_p^2))$-RDP.
Second, DP-SGD satisfies $(\alpha, \varepsilon_{rs}(\alpha))$-RDP in each step from Theorem \ref{theo:relationRDP-MA} and Inequality (\ref{equ:ma-dp-sgd}).
Third, as in the case of DP-SGD, DP-EM satisfies $(\alpha, \varepsilon_{re}(\alpha))$-RDP in each step from Theorem \ref{theo:relationRDP-MA} and Inequality (\ref{equ:ma-dp-em}).
At last, from the composition theorem (Theorem \ref{theo:compositionRDP}) in RDP, P3GM satisfies $(\alpha, \varepsilon_p(\alpha)+T_s\varepsilon_{rs}(\alpha)+T_e\varepsilon_{re}(\alpha))$-RDP.
By conversion from RDP to DP from Theorem \ref{theo:relationRDPDP}, we meet (\ref{theo:rdp_comp}).
\end{proof}

\section{Discussions} \label{sec:discussion}

First, we theoretically discuss why the AEVB algorithm finds a better solution under differential privacy by adopting the two-phased training than the end-to-end training of VAE.
\Add{
Second, we discuss the parameter tuning.
}


\subsection{Solution Space Elimination}

Here, we discuss the effect to the solution space by fixing the mean of the encoder to some constant value $\mu_\phi(x)=c_x$ (In this paper, we use \Add{the dimension reduced data of $x$ by PCA} as $c_x$).
If the encoder freezes the the mean, it only searches variances to fit the posterior distribution $q_\phi(z|x)$ to the prior distribution $p_\theta(z)$.

PGM optimizes only the variance $\sigma_\phi(x)$ and the parameters of the decoder $\theta$ with the assumption that $q_\phi(z|x)$ is a Gaussian distribution whose mean is a constant $c_x$.
The loss function of PGM is as follows:
\begin{equation}
\begin{split}
\mathcal{L}_{PGM}(x)=&-D_{\text{KL}}(\mathcal{N}(c_x,\sigma_\phi(x))||p_\theta(z))\\
&+\int{\mathcal{N}(z; c_x,\sigma_\phi(x))\log p_\theta(x|z)}dz
\end{split}
\label{eq:loss_pgm}
\end{equation}

\Add{
Comparing with this, $c_x$ is not freezed, so the search space for the loss function in VAE is clearly bigger than PGM.
Assuming $\mu_\phi(x),\sigma_\phi(x)$, and $\theta$ be any values and $p_\theta(z)$ of VAE and PGM be the same, the range of all possible solutions of VAE includes all possible solutions of PGM.
}

Furthermore, assuming that $\sigma(x)$ is a constant $s_x$, we can more eliminate the search space.
The first term in Equation (\ref{eq:loss_pgm}) becomes a constant, and this is identical to autoencoder (AE) when we set $s_x=0$.
Intuitively, the decoder only learns to decode $c_x$ because the encoder is fixed to encode $x$ to $c_x$.

The elimination of the search space helps our model to discover solutions within a smaller number of iterations.
Our experiments in Section \ref{sec:exp} will demonstrate this fact.

\subsection{\Add{Parameter setting}}
\Add{
P3GM has many parameters that impact privacy and utility.
Our unique parameters that differentiate one of VAE are the reduced dimensionality and the ratio of the privacy budget allocation.
The reduced dimensionality should be higher to keep the information, but it should be small enough to estimate MoG effectively, and we found $[10, 100]$ is better.
The privacy budget allocation is also an important aspect because if the encoder fails to learn the encoding, the decoder will fail to learn the decoding, and vice verse.
Through experiments, We found that the ratio $3:7$ of the allocation to the encoder and the decoder is the better choice.
You can find more details for the experiments in Section \ref{subsec:dim} and \ref{subsec:allocation}.
However, we should have theoretically better parameters since we consume the privacy budget for the observation when we try a set of parameters to see accuracy, which remains for future work.
We note that we can use Gupta's technique to save the privacy budget~\cite{abadi2016deep, gupta2010differentially} because we do not need the output but only need accuracy to choose the better parameters.
}

\section{Experiments} \label{sec:exp}

In this section, we report the results of the experimental evaluation of \method.
For evaluating our model, we design the experiments to answer the following questions:
\begin{itemize}
    \item How effective can the generated samples be used in data mining tasks?
    \item How efficient in constructing a differentially private model?
    \item How much privacy consumption can be reduced in the privacy compositions?
\end{itemize}
To empirically validate the effectiveness of synthetic data sampled from \method, \Add{we conducted two different experiments.}

\Add{\textbf{Classification:}}
First, we train \method using a real training dataset and generate a dataset so that the label ratio is the same as the real training dataset, as we described in Section \ref{subsubsec:sample}.
Then, we train the multiple classifiers on the synthetic data and evaluate the classifiers on the real test dataset.
For the evaluation of the binary classifiers, we use the area under the receiver operating characteristics curve (AUROC) and area under the precision recall curve (AUPRC). 
For the evaluation of the multi-class classifier, we use classification accuracy.

\Add{\textbf{2-way marginals:}}
\Add{The second experiment builds all 2-way marginals of a dataset~\cite{barak2007privacy}.
We evaluate the difference between the 2-way marginals made by synthetic data and original data.
We use the average of the total variation distance of all 2-way marginals to measure the difference (i.e., the average of the half of $L_1$ distance between the two distributions on the 2-way marginals).
}

\textbf{Datasets.}
We use six real datasets as shown in Table~\ref{tab:dataset} to evaluate the performance of \method. 
Each dataset has the following characteristic:
Kaggle credit card fraud detection dataset (Kaggle Credit) is very unbalanced data which contains only $0.2\%$ positive data. 
UCI ISOLET and UCI Epileptic Seizure Recognition (ESR) dataset are higher-dimensional data whose sample sizes are small against the dimension sizes. 
Adult is a well known dataset to evaluate privacy preserving data publishing, data mining, and data synthesis.
Adult includes 15 attributes with binary class.
MNIST and Fashion-MNIST are datasets with $28 \times 28$ gray-scale images and have a label form 10 classes. 
We use $90\%$ of the datasets as training datasets and the rest as test datasets.

\begin{table}
\centering
\small
\begin{tabular}{lrrrr}
    \toprule
    Dataset                                 & $N$       & \#feature & \#class   & \%positive \\ \midrule
    Kaggle Credit~\cite{dal2015calibrating} & 284807    & 29        & 2         & 0.2 \\
    Adult \protect \footnotemark            & 45222     & 15        & 2         & 24.1 \\
    UCI ISOLET \protect\footnotemark        & 7797      & 617       & 2         & 19.2 \\
    UCI ESR \protect\footnotemark           & 11500     & 179       & 2         & 20.0 \\
    MNIST                                   & 70000     & 784       & 10        & -  \\
    Fashion-MNIST                           & 70000     & 784       & 10        & -  \\ \bottomrule
\end{tabular}
\caption{Datasets}
\label{tab:dataset}
\end{table}
\footnotetext[4]{https://archive.ics.uci.edu/ml/datasets/adult}
\footnotetext[5]{https://archive.ics.uci.edu/ml/datasets/Epileptic+Seizure+Recognition}
\footnotetext[6]{https://archive.ics.uci.edu/ml/datasets/isolet}

\begin{table}[t!]
\small
\centering
\begin{tabular}{llcrr}
\toprule
              & $\sigma_s$ & learning rate & \#epochs   & batch size \\ \midrule
Kaggle Credit & 2.1       & 0.001         & 15         & 100  \\
Adult         & 1.4        & 0.001          & 5           & 200  \\
UCI ISOLET    & 1.6       & 0.001          & 2          & 100  \\
UCI ESR       & 1.4        & 0.001         & 2          & 100  \\
MNIST         & 1.4       & 0.001         & 4          & 300  \\
Fashion MNIST & 1.4       & 0.001         & 4          & 300  \\ \bottomrule
\end{tabular}
\caption{Hyper-parameters for each dataset.}
\label{tab:hyperparams}
\end{table}

\textbf{Implementations of Generative Models.}
The encoder has two FC layers of [$d, 1000, d^\prime$] with ReLU as the activate function. $d$ is the dimensionality of data and $d^\prime$ is the reduced dimensionality. 
The decoder also has two FC layers with [$d^\prime, 1000, d$] with ReLU as the activate function. 
We show the hyper-parameters used in DP-SGD for each dataset in Table~\ref{tab:hyperparams}.
For the Kaggle Credit dataset, we did not apply dimensionality reduction because this dataset is originally low dimensionality. 
For the other datasets, we did dimensionality reduction with reduced dimensionality $d_p=10$ and $\varepsilon_p=0.1$. 
We set $\sigma_e$ as $\varepsilon=1$ holds, $T_e=20$, \Add{the number of components of MoG as $d_m=3$, and we use the diagonal covariance matrix as the covariance matrix of MoG for the efficiency.} 
We develop the above models by Python 3.6.9 and PyTorch 1.4.0 \cite{paszke2017automatic}.

\textbf{Implementations of Classifiers.}
For table datasets, we use four different classifiers, LogisticRegression (LR), AdaBoostClassifier (AB)~\cite{freund1996experiments}, GradientBoostingClassifier (GBM)~\cite{friedman2001greedy}, and XgBoost (XB)~\cite{chen2016xgboost} from Python libraries, scikit-learn 0.22.1, and xgboost 0.90. 
We set the parameters of sklearn.GradientBoostingClassifier as max\_features="sqrt", max\_depth=8, min\_samples\_leaf=50 and min\_samples\_split=200.
Other parameters are set to default.
For image datasets, we train a CNN for the classification tasks using Softmax. 
The model has one Convolutional network with 28 kernels whose size is (3,3) and MaxPooling whose size is (2,2) and two FC layers with [128, 10]. 
We use ReLU as the activate function and apply dropout in FC layers.

\textbf{Competitors.}
We compare \method with PrivBayes \cite{zhang2014privbayes}, DP-GM \cite{acs2018differentially}, \Add{Ryan's algorithm~\cite{kenna2019nistwinner}} and DP-VAE.

\textbf{Reproducibility.}
We will make the code public for reproducibility.
Under the review process, our code is available on the anonymous repository\footnote{\url{https://github.com/tkgsn/P3GM}}.

\begin{table}
\centering
\small
\begin{tabular}{@{}lllllllll@{}}
    \toprule
    \multirow{2}{*}{}   & \multicolumn{3}{c}{AUROC} & \multicolumn{3}{c}{AUPRC} \\  
                        & VAE    & PGM   & \textbf{P3GM}   & VAE    & PGM  & \textbf{P3GM}   \\ 
                        \cmidrule(lr){1-1} \cmidrule(lr){2-4} \cmidrule(lr){5-7}
    LR                     & 0.9617 & 0.9454   & 0.9264 & 0.6542 & 0.6865  & 0.6750 \\
    AB    & 0.9599 & 0.9330   & 0.9026 & 0.5737 & 0.6528  & 0.6474 \\
    GBM           & 0.9619 & 0.9442   & 0.9182 & 0.6838 & 0.6734  & 0.6645 \\
    XB          & 0.9395 & 0.9321   & 0.9026 & 0.2745 & 0.6469  & 0.6218 \\ \bottomrule
\end{tabular}
\caption{Accuracy comparison with non-private models. PGM and \method show relatively close accuracy against VAE.}
\label{tab:exp_credit}
\end{table}


\begin{table*}[t!]
\small
\centering
\begin{tabular}{@{}lllllllllll@{}}
\toprule
\multicolumn{1}{l}{\multirow{2}{*}{Dataset}} & \multicolumn{5}{c}{AUROC} & \multicolumn{5}{c}{AUPRC}\\
\multicolumn{1}{c}{} & PrivBayes & {\Add {Ryan's}} & DP-GM   & \textbf{P3GM} & original & PrivBayes & {\Add {Ryan's}} & DP-GM    & \textbf{P3GM} & original \\ \cmidrule(l){1-1} \cmidrule(l){2-6}\cmidrule(l){7-11}
Kaggle Credit & 0.5520 & 0.5326 & 0.8805     & \textbf{0.8991}     & 0.9663   & 0.2084 & 0.2503 & 0.3301     & \textbf{0.6586}        & 0.8927   \\
UCI ESR                                       & 0.5377 & 0.5757 & 0.4911     & \textbf{0.8801}      & 0.8698 & 0.5419 & 0.4265 & 0.3311     & \textbf{0.7672}        & 0.8098   \\
Adult                                         & \textbf{0.8530} & 0.5048 & 0.7806     & 0.8214        & 0.9119   & \textbf{0.6374} & 0.2584 & 0.4502    & 0.5972        & 0.7844   \\
UCI ISOLET                                    & 0.5100 & 0.5326 & 0.4695     & \textbf{0.7498}        & 0.9891   & 0.2084 & 0.2099 & 0.1816    & \textbf{0.3950}        & 0.9623   \\ \bottomrule
\end{tabular}
\caption{Performance comparison on four real datasets. Each score is the average AUROC or AUPRC over four classifiers listed in Table \ref{tab:exp_credit}. \method outperforms other two differentially private models on three datasets.}
\label{tab:compara_exist}
\end{table*}


\subsection{Effectiveness in Data Mining Tasks}

We evaluate how effective can the generated samples be used in several data mining tasks.
We also empirically evaluate the trade-off between utility and the privacy protection level.

\textbf{Against non-private models in table data.}
Here, we show that P3GM with ($1,10^{-5}$)-DP does not cause much utility loss than non-private models: PGM and VAE.
Table~\ref{tab:exp_credit} presents the results on the Kaggle Credit dataset. 
As listed in Table~\ref{tab:exp_credit}, we utilized four different classifiers. 
Comparing PGM with VAE, it is said that PGM has similar expression power as VAE. 
Comparing P3GM with non-private methods, in spite of the noise for DP, we can see that scores of P3GM do not significantly decrease, which shows the tolerance to the noise.

\textbf{Comparison with private models in table data.}
Next, we perform comparative analysis for \method, PrivBayes, and DP-GM with ($1,10^{-5}$)-DP on four real datasets. 
In Table~\ref{tab:compara_exist}, we give the performance on each dataset averaged across these four different classifiers as well as Table~\ref{tab:exp_credit}. 
\method outperforms the other two differentially private models on three datasets in AUROC and AUPRC.
However, there is much degradation on the UCI ISOLET dataset. 
This is because the smaller data size causes more noise for DP and the high dimensionality makes it difficult to find a good solution in the small number of iterations.
On the Adult dataset, PrivBayes shows a little better performance than P3GM due to the simpler model.
However, PrivBayes only performs well for datasets having simple dependencies and a small number of features like the Adult dataset.
Regarding the high dimensional data, our method is significantly better than PrivBayes.
\Add{Ryan's algorithm does not perform well on this task.
Since Ryan's algorithm constructs data from limited information (i.e., 1, 2, and 3-way marginals), Ryan's algorithm cannot reconstruct the essential information which is required for the classification task.
}

\textbf{Classification on image datasets.}
We perform comparative analysis with DP-GM and PrivBayes on MNIST and Fashion-MNIST, whose data is high-dimensional.
In Table~\ref{tab:exp_mnist}, we give the classification accuracies of classifiers trained on each synthetic data.
P3GM results in much better results than DP-GM and PrivBayes, which shows the robustness to high-dimensional data.
Moreover, \method shows around 6\% and 5\% less accuracy than VAE on MNIST and Fashion-MNIST, respectively, under ($\varepsilon, \delta$)=($1,10^{-5}$).
The accuracies are relatively close to VAEs even P3GM satisfied the differential privacy.
Back to Figure \ref{fig:crown_jewel}, we also displayed generated samples from our model and competitors.
As we can see, \method can generate images that are visually closer to VAE while satisfying differential privacy.

\begin{table}[t]
\small
\centering
\begin{tabular}{@{}llllll@{}}
\toprule
Dataset       & VAE  & DP-GM & PrivBayes  & {\Add{Ryan's}} &  \textbf{P3GM}   \\ \midrule
MNIST         & 0.8571                 & 0.4973 & 0.0970 & 0.2385 & \bf{0.7940} \\
Fashion & 0.7854                 & 0.5200 & 0.0996 & 0.2408 & \bf{0.7485} \\ \bottomrule
\end{tabular}
\caption{Classification accuracies on image datasets.}
\label{tab:exp_mnist}
\end{table}

\begin{figure}[t]
    \centering    
    \subfloat[AUPRC]{
        \includegraphics[width=0.495\hsize]{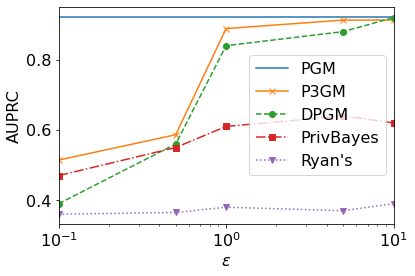}
        \label{fig:auroc_kaggle}
    }
    \subfloat[AUROC]{
        \includegraphics[width=0.495\hsize]{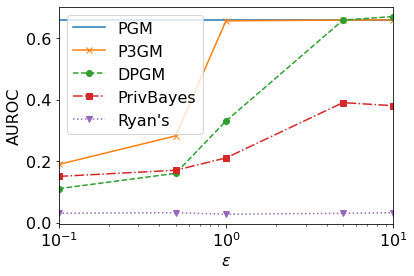}
        \label{fig:auprc_kaggle}
    }
    \caption{\Add{Utility in fraud detection (Kaggle Credit).}}
    \label{fig:exp_credit_wrt_eps}
\end{figure}

\textbf{Varying privacy levels.}
Here, we measure our proposed method's performance when we vary the privacy protection level $\varepsilon$.
In Figure~\ref{fig:exp_credit_wrt_eps}, we plot AUROC and AUPRC of classifiers trained using synthetic Kaggle Credit dataset generated by \method, DP-GM and PrivBayes w.r.t. each $\varepsilon$ with $\delta=10^{-5}$.
PrivBayes does not show high scores even when $\varepsilon$ is large, which means that PrivBayes does not have enough capacity to generate datasets whose dependencies of attributes are complicated, such as Kaggle Credit.
Also, as we can see, although DP-GM rapidly degrades the scores as $\varepsilon$ becomes smaller, the ones of P3GM does not significantly decrease.
This result shows that P3GM is not significantly influenced by the noise for satisfying DP.

\textbf{Varying degree of dimensionality reduction.}
Here, we empirically explain how the number of components of PCA affects the performance of \method. 
Figure~\ref{fig:pca_perform} shows the results on the MNIST.
As we can see, the number of components ($d_p$) affects performance. 
Too much high dimensionality makes (DP-)EM algorithm ineffective due to the curse of dimensionality. 
Too much small dimensionality lacks the expressive power for embedding.
From the result, $d_p=[10,100]$ looks a good solution with balancing the accuracy and the dimensionality reduction on the MNIST dataset.

\begin{figure}[t]
\begin{minipage}{0.49\hsize}
   \centering
   \includegraphics[width=\hsize]{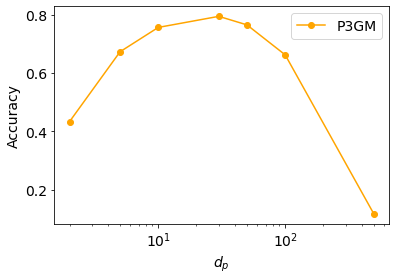}
   \caption{Reducing dimension improves accuracy (MNIST).}
   \label{fig:pca_perform}
\end{minipage}
\hfill
\begin{minipage}{0.49\hsize}
   \centering
   \includegraphics[width=\hsize]{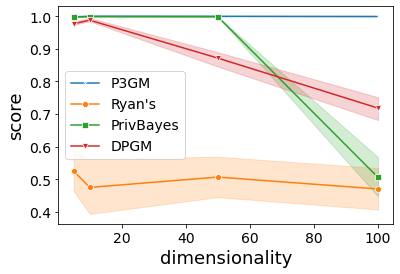}
   \caption{\Add{Only P3GM can handle high-dimensionality.}}
   \label{fig:dimensonality}
\end{minipage}
\end{figure}

\begin{figure*}[t]
    \centering    
    \subfloat[Reconstruction loss (MNIST).]{
        \includegraphics[width=0.24\hsize]{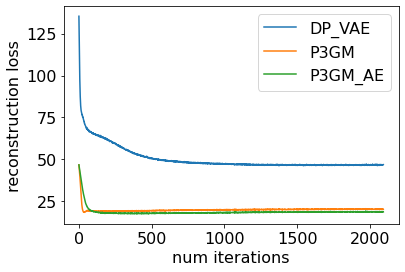}
        \label{fig:loss_mnist}
    }
    \subfloat[Reconstruction loss (Kaggle Credit)]{
        \includegraphics[width=0.24\hsize]{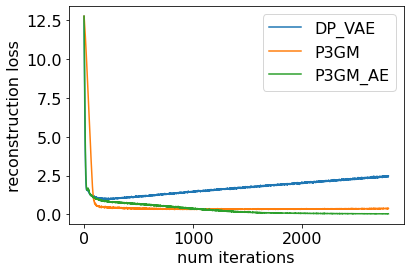}
        \label{fig:loss_credit}
    }
    \subfloat[Classification accuracy (MNIST).]{
        \includegraphics[width=0.24\hsize]{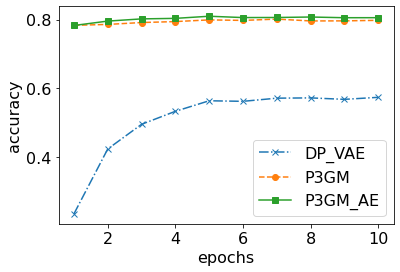}
        \label{fig:epoch_mnist}
    }
    \subfloat[AUROC (Kaggle Credit)]{
        \includegraphics[width=0.24\hsize]{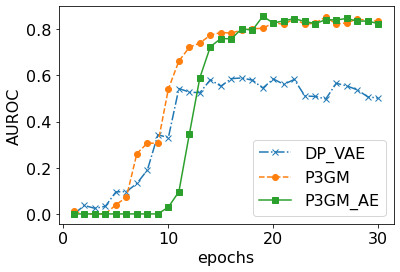}
        \label{fig:epoch_credit}
    }
    \caption{\method demonstrates higher learning efficiency than DP-VAE. More simple model increases more learning efficiency.}
    \label{fig:efficiency}
\end{figure*}

\subsection{Learning Efficiency}

Here, we measure the learning efficiency of the proposed method.
As discussed in Section \ref{sec:discussion}, we can interpret that our model reduces the search space to accelerate the convergence speed.
We empirically demonstrate it in Figure \ref{fig:efficiency}.
Here, let \method(AE) denote \method with fixing $\sigma_\phi(x)=0$. 

In Figure \ref{fig:loss_mnist} and Figure \ref{fig:loss_credit}, our proposed method shows faster converegence than na\"ive method (DP-VAE) in the reconstruction loss  (the first term of (\ref{eq:elbo_approx})).
For these two datasets, \method met convergences at earlier epochs than DP-VAE.
In Figure \ref{fig:loss_credit}, the loss of DP-VAE is decreased gradually in the long term but shows fluctuations in the short term.
In contrast, \method shows monotonic decreases in reconstruction loss.
This is due to the solution space elimination by freezing the encoder in our model.

We plot the performance in each epoch to see the convergence speed of the models in Figure \ref{fig:epoch_mnist} and Figure \ref{fig:epoch_credit}, which shows that the performance with the smaller search space also converges faster.
Figure \ref{fig:epoch_mnist} shows the classification accuracy with MNIST dataset and Figure \ref{fig:epoch_credit} shows the AUROC with the Kaggle Credit dataset.
In both results, \method(AE) converged at the earliest iteration in those three methods.
While at the end of iterations, \method shows the best results, and \method(AE) is the second-best.
This is because the search space of P3GM is larger than P3GM (AE), so it can find the better solution.
In a similar vein, VAE can find a better solution since the search space of VAE is larger than P3GM, but it will cost a non-acceptable privacy budget.
P3GM balances the search space size and the cost of the privacy budget.

\begin{table}[t]
\small
\centering
\begin{tabular}{@{}lllll@{}}
\toprule
\Add{Dataset}       & \Add{PrivBayes} & \Add{Ryan's} & \Add{DP-GM} & \Add{P3GM}   \\ \midrule
\Add{Kaggle Credit} & \Add{0.9326}    & {\Add{\textbf{0.0082}}} & \Add{0.1345}      & \Add{0.2411} \\
\Add{UCI ESR}       & \Add{0.8793}    & {\Add{0.0936}} & \Add{0.6083}      & \Add{\textbf{0.0654}} \\
\Add{Adult}         & \Add{0.0752}    & {\Add{\textbf{0.0494}}} & \Add{0.2672}      & \Add{0.3867} \\
\Add{UCI ISOLET}    & \Add{0.5324}    & {\Add{\textbf{0.1611}}} & \Add{0.6855}      & \Add{0.3029} \\ \bottomrule
\end{tabular}
\caption{\Add{Avg. total variational distance of 2-way marginals.\label{tab:2way}}}
\end{table}

\subsection{\Add{Accuracy of 2-way marginals}}
\Add{Table~\ref{tab:2way} shows the results of the 2-way marginals experiment, which is one of the NIST competition measures.
Here, since the total variation distance represents the error of the distribution (of 2-way marginals) made by the synthetic data, we can see that Ryan's algorithm performs the best.
This is because Ryan's algorithm directly computes the crucial 2-way marginals with the Gaussian mechanism, which leads to the win in the NIST competition.
The direct method, such as Ryan's algorithm, is useful for a simple task like building a 2-way distribution. 
Still, it loses the complex information such as the dependencies in multiple attributes, which results in the worse result in the classification task.
Although P3GM is inferior to Ryan's algorithm for this task, P3GM keeps the 2-way marginal distribution better than PrivBayes and DP-GM for datasets with high dimensionalities.
}

\subsection{{\Add{Dimensionality}}}
\label{subsec:dim}
\Add{We here experimentally show the impact to the algorithms by a dimensionality.
We make a dataset by sampling from two Gaussian distributions whose means are $(1,1 , \dots,1)$ and $(-1, -1, \dots, -1)$ and whose covariance matrices are the identity matrices and attach different labels.
From the two Gaussians, we generate synthesized datasets with $\epsilon=1$ varying the dimensionality from $5$ to $100$, and measure the AUROC in binary classifications for the datasets.
Figure~\ref{fig:dimensonality} shows these scores of each method.
P3GM can handle high-dimensinality, but the others are highly depending on the dimensionality.}

\subsection{{\Add{Privacy budget allocation}}}
\label{subsec:allocation}
\Add{P3GM has two components: Encoding Phase (i.e., DP-PCA and DP-EM) and Decoding Phase (i.e., DP-SGD).
So far, we used the fixed privacy budget allocation for the two components.
Here, we experimentally explore the P3GM's perfomance variations when varying the ratio of the allocation while keeping the total privacy budget to $1$.
We plot the result for the Adult dataset in Figure~\ref{fig:budget_allocation}.
We can see that the score is top when the ratio is from $0.1$ to $0.3$\footnote{This is lower than the case of the Wishart PCA algorithm (about the Wishart PCA, refer to the footnote of the first page). One can see that the more noisy PCA algorithm requires more budget for the reconstruction to compensate the PCA algorithm.}.
However, it is interesting to explore a theoretically better allocation ratio rather than this empirical result, which remains for future work.
}

\begin{figure}[t]
\begin{minipage}{0.49\hsize}
   \centering
   \includegraphics[width=\hsize]{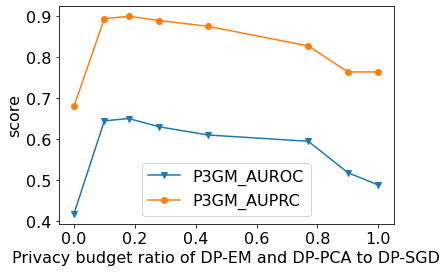}
   \caption{\Add{Performance in varying privacy budget allocations.}}
   \label{fig:budget_allocation}
\end{minipage}
\hfill
\begin{minipage}{0.49\hsize}
  \centering\includegraphics[width=\hsize]{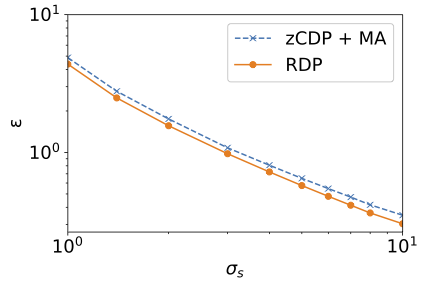}
  \caption{Privacy composition in RDP saves privacy budgets.}
  \label{fig:privacy_accountant}
\end{minipage}
\end{figure}

\subsection{Privacy Composition}

In this paper, we introduced the composition of privacy loss based on RDP.
Here, we show that our composition method more rigorously accountants each privacy budget than the baseline.
We use zCDP for DP-EM and MA for DP-SGD as the baseline, which is proposed composition methods in the corresponding papers.
Figure \ref{fig:privacy_accountant} shows the computed value of $\varepsilon$ by each method, varying the amount of noise for DP-SGD.
We freeze the amount of noise for MA.
Our composition based on RDP results in a smaller value of $\epsilon$ than the baseline.
Thus our method can compute the privacy composition in a lean way.

\section{Conclusion}


This paper addressed the question, how can we release a massive volume of sensitive data while mitigating privacy risks?
Particularly, to construct a differentially private deep generative model for high dimensional data, we introduced a novel model, \method.
The proposed model \method hires an encoder-decoder framework as well as VAE but employs a different algorithm that introduces a two-phase process for training the model to increase the robustness to the differential privacy constraint.
We also gave a theoretical analysis of how effectively our method reduces complexity comparing with VAE.
We further provided an extensive experimental evaluation of the accuracy of the synthetic datasets generated from \method.
Our experiments showed that data mining tasks using data generated by \method are more accurate than existing techniques in many cases.
The experiments also demonstrated that \method generated samples with less noise and \Add{resulted in higher utility in classification tasks} than competitors.
\Add{
Exploring the the optimality of parameters, dimensionality reduction (Equation \ref{equ:error_dimensional_reduction}), and estimation of the latent distribution (Equation \ref{equ:mle_lambda}) remains for future works.
}


\bibliographystyle{abbrv}
\bibliography{ref}

\begin{thebibliography}{10}

\bibitem{wishartwrongproof}
https://ergodicity.net/2017/04/07/retraction-for-symmetric-matrix-perturbation-for-differentially-private-principal-component-analysis-icassp-2016/.

\bibitem{abadi2016deep}
M.~Abadi, A.~Chu, I.~Goodfellow, H.~B. McMahan, I.~Mironov, K.~Talwar, and
  L.~Zhang.
\newblock Deep learning with differential privacy.
\newblock In {\em Proceedings of the 2016 ACM SIGSAC Conference on Computer and
  Communications Security}, pages 308--318. ACM, 2016.

\bibitem{abowd2018us}
J.~M. Abowd.
\newblock The us census bureau adopts differential privacy.
\newblock In {\em Proceedings of the 24th ACM SIGKDD International Conference
  on Knowledge Discovery \& Data Mining}, pages 2867--2867, 2018.

\bibitem{abul2008never}
O.~Abul, F.~Bonchi, and M.~Nanni.
\newblock Never walk alone: Uncertainty for anonymity in moving objects
  databases.
\newblock In {\em 2008 IEEE 24th international conference on data engineering},
  pages 376--385. Ieee, 2008.

\bibitem{acs2018differentially}
G.~Acs, L.~Melis, C.~Castelluccia, and E.~De~Cristofaro.
\newblock Differentially private mixture of generative neural networks.
\newblock {\em IEEE Transactions on Knowledge and Data Engineering},
  31(6):1109--1121, 2018.

\bibitem{barak2007privacy}
B.~Barak, K.~Chaudhuri, C.~Dwork, S.~Kale, F.~McSherry, and K.~Talwar.
\newblock Privacy, accuracy, and consistency too: a holistic solution to
  contingency table release.
\newblock In {\em Proceedings of the twenty-sixth ACM SIGMOD-SIGACT-SIGART
  symposium on Principles of database systems}, pages 273--282, 2007.

\bibitem{bindschaedler2017plausible}
V.~Bindschaedler, R.~Shokri, and C.~A. Gunter.
\newblock Plausible deniability for privacy-preserving data synthesis.
\newblock {\em Proceedings of the VLDB Endowment}, 10(5):481--492, 2017.

\bibitem{bun2016concentrated}
M.~Bun and T.~Steinke.
\newblock Concentrated differential privacy: Simplifications, extensions, and
  lower bounds.
\newblock In {\em Theory of Cryptography Conference}, pages 635--658. Springer,
  2016.

\bibitem{uscensus2020}
U.~S.~C. Bureau.
\newblock Disclosure avoidance and the 2020 census.
\newblock
  \url{https://www.census.gov/about/policies/privacy/statistical_safeguards/disclosure-avoidance-2020-census.html},
  2019.

\bibitem{cao2021don}
T.~Cao, A.~Bie, A.~Vahdat, S.~Fidler, and K.~Kreis.
\newblock Don’t generate me: Training differentially private generative
  models with sinkhorn divergence.
\newblock {\em Advances in Neural Information Processing Systems}, 34, 2021.

\bibitem{chaudhuri2019capacity}
K.~Chaudhuri, J.~Imola, and A.~Machanavajjhala.
\newblock Capacity bounded differential privacy.
\newblock In {\em Advances in Neural Information Processing Systems}, pages
  3469--3478, 2019.

\bibitem{chen2015differentially}
R.~Chen, Q.~Xiao, Y.~Zhang, and J.~Xu.
\newblock Differentially private high-dimensional data publication via
  sampling-based inference.
\newblock In {\em Proceedings of the 21th ACM SIGKDD International Conference
  on Knowledge Discovery and Data Mining}, pages 129--138. ACM, 2015.

\bibitem{chen2016xgboost}
T.~Chen and C.~Guestrin.
\newblock Xgboost: A scalable tree boosting system.
\newblock In {\em Proceedings of the 22nd acm sigkdd international conference
  on knowledge discovery and data mining}, pages 785--794. ACM, 2016.

\bibitem{cormode2019constrained}
G.~Cormode, T.~Kulkarni, and D.~Srivastava.
\newblock Constrained private mechanisms for count data.
\newblock {\em IEEE Transactions on Knowledge and Data Engineering}, 2019.

\bibitem{dal2015calibrating}
A.~Dal~Pozzolo, O.~Caelen, R.~A. Johnson, and G.~Bontempi.
\newblock Calibrating probability with undersampling for unbalanced
  classification.
\newblock In {\em 2015 IEEE Symposium Series on Computational Intelligence},
  pages 159--166. IEEE, 2015.

\bibitem{domingo2020general}
J.~Domingo-Ferrer, K.~Muralidhar, and M.~Bras-Amor{\'o}s.
\newblock General confidentiality and utility metrics for privacy-preserving
  data publishing based on the permutation model.
\newblock {\em IEEE Transactions on Dependable and Secure Computing}, 2020.

\bibitem{dwork2006differential}
C.~Dwork.
\newblock Differential privacy.
\newblock In {\em Proceedings of the 33rd international conference on Automata,
  Languages and Programming-Volume Part II}, pages 1--12. Springer-Verlag,
  2006.

\bibitem{dwork2014analyze}
C.~Dwork, K.~Talwar, A.~Thakurta, and L.~Zhang.
\newblock Analyze gauss: optimal bounds for privacy-preserving principal
  component analysis.
\newblock In {\em Proceedings of the forty-sixth annual ACM symposium on Theory
  of computing}, pages 11--20, 2014.

\bibitem{freund1996experiments}
Y.~Freund, R.~E. Schapire, et~al.
\newblock Experiments with a new boosting algorithm.
\newblock In {\em icml}, volume~96, pages 148--156. Citeseer, 1996.

\bibitem{friedman2001greedy}
J.~H. Friedman.
\newblock Greedy function approximation: a gradient boosting machine.
\newblock {\em Annals of statistics}, pages 1189--1232, 2001.

\bibitem{goodfellow2014generative}
I.~Goodfellow, J.~Pouget-Abadie, M.~Mirza, B.~Xu, D.~Warde-Farley, S.~Ozair,
  A.~Courville, and Y.~Bengio.
\newblock Generative adversarial nets.
\newblock In {\em Advances in neural information processing systems}, pages
  2672--2680, 2014.

\bibitem{gupta2010differentially}
A.~Gupta, K.~Ligett, F.~McSherry, A.~Roth, and K.~Talwar.
\newblock Differentially private combinatorial optimization.
\newblock In {\em Proceedings of the twenty-first annual ACM-SIAM symposium on
  Discrete Algorithms}, pages 1106--1125. SIAM, 2010.

\bibitem{hershey2007approximating}
J.~R. Hershey and P.~A. Olsen.
\newblock Approximating the kullback leibler divergence between gaussian
  mixture models.
\newblock In {\em 2007 IEEE International Conference on Acoustics, Speech and
  Signal Processing-ICASSP'07}, volume~4, pages IV--317. IEEE, 2007.

\bibitem{jiang2016wishart}
W.~Jiang, C.~Xie, and Z.~Zhang.
\newblock Wishart mechanism for differentially private principal components
  analysis.
\newblock In {\em Thirtieth AAAI Conference on Artificial Intelligence}, 2016.

\bibitem{jordon2018pate}
J.~Jordon, J.~Yoon, and M.~van~der Schaar.
\newblock Pate-gan: generating synthetic data with differential privacy
  guarantees.
\newblock In {\em International Conference on Learning Representations}, 2018.

\bibitem{kingma2016improved}
D.~P. Kingma, T.~Salimans, R.~Jozefowicz, X.~Chen, I.~Sutskever, and
  M.~Welling.
\newblock Improved variational inference with inverse autoregressive flow.
\newblock In {\em Advances in neural information processing systems}, pages
  4743--4751, 2016.

\bibitem{kingma2013auto}
D.~P. Kingma and M.~Welling.
\newblock Auto-encoding variational bayes.
\newblock {\em arXiv preprint arXiv:1312.6114}, 2013.

\bibitem{kotsogiannis2019privatesql}
I.~Kotsogiannis, Y.~Tao, X.~He, M.~Fanaeepour, A.~Machanavajjhala, M.~Hay, and
  G.~Miklau.
\newblock Privatesql: a differentially private sql query engine.
\newblock {\em Proceedings of the VLDB Endowment}, 12(11):1371--1384, 2019.

\bibitem{kuo2018differentially}
Y.-H. Kuo, C.-C. Chiu, D.~Kifer, M.~Hay, and A.~Machanavajjhala.
\newblock Differentially private hierarchical count-of-counts histograms.
\newblock {\em Proceedings of the VLDB Endowment}, 11(11):1509--1521, 2018.

\bibitem{machanavajjhala2006diversity}
A.~Machanavajjhala, J.~Gehrke, D.~Kifer, and M.~Venkitasubramaniam.
\newblock l-diversity: Privacy beyond k-anonymity.
\newblock In {\em 22nd International Conference on Data Engineering (ICDE'06)},
  pages 24--24. IEEE, 2006.

\bibitem{kenna2019nistwinner}
R.~McKenna.
\newblock rmckenna - differential privacy synthetic data challenge algorithm.
\newblock
  \url{https://github.com/usnistgov/PrivacyEngCollabSpace/tree/master/tools/de-identification/Differential-Privacy-Synthetic-Data-Challenge-Algorithms/rmckenna},
  2019.

\bibitem{mcsherry2009privacy}
F.~D. McSherry.
\newblock Privacy integrated queries: an extensible platform for
  privacy-preserving data analysis.
\newblock In {\em Proceedings of the 2009 ACM SIGMOD International Conference
  on Management of data}, pages 19--30, 2009.

\bibitem{mironov2017renyi}
I.~Mironov.
\newblock R{\'e}nyi differential privacy.
\newblock In {\em 2017 IEEE 30th Computer Security Foundations Symposium
  (CSF)}, pages 263--275. IEEE, 2017.

\bibitem{murtagh2016complexity}
J.~Murtagh and S.~Vadhan.
\newblock The complexity of computing the optimal composition of differential
  privacy.
\newblock In {\em Theory of Cryptography Conference}, pages 157--175. Springer,
  2016.

\bibitem{nistppds2018}
NIST.
\newblock Differential privacy synthetic data challenge.
\newblock
  \url{https://www.topcoder.com/community/data-science/Differential-Privacy-Synthetic-Data-Challenge},
  2018.

\bibitem{papernot2016semi}
N.~Papernot, M.~Abadi, U.~Erlingsson, I.~Goodfellow, and K.~Talwar.
\newblock Semi-supervised knowledge transfer for deep learning from private
  training data.
\newblock {\em arXiv preprint arXiv:1610.05755}, 2016.

\bibitem{park2017dp}
M.~Park, J.~Foulds, K.~Choudhary, and M.~Welling.
\newblock Dp-em: Differentially private expectation maximization.
\newblock In {\em Artificial Intelligence and Statistics}, pages 896--904,
  2017.

\bibitem{paszke2017automatic}
A.~Paszke, S.~Gross, S.~Chintala, G.~Chanan, E.~Yang, Z.~DeVito, Z.~Lin,
  A.~Desmaison, L.~Antiga, and A.~Lerer.
\newblock Automatic differentiation in pytorch.
\newblock 2017.

\bibitem{sahin2018differentially}
C.~Sahin, T.~Allard, R.~Akbarinia, A.~El~Abbadi, and E.~Pacitti.
\newblock A differentially private index for range query processing in clouds.
\newblock In {\em 2018 IEEE 34th International Conference on Data Engineering
  (ICDE)}, pages 857--868. IEEE, 2018.

\bibitem{sweeney2002k}
L.~Sweeney.
\newblock k-anonymity: A model for protecting privacy.
\newblock {\em International Journal of Uncertainty, Fuzziness and
  Knowledge-Based Systems}, 10(05):557--570, 2002.

\bibitem{wang2018subsampled}
Y.-X. Wang, B.~Balle, and S.~Kasiviswanathan.
\newblock Subsampled r$\backslash$'enyi differential privacy and analytical
  moments accountant.
\newblock {\em arXiv preprint arXiv:1808.00087}, 2018.

\bibitem{xie2018differentially}
L.~Xie, K.~Lin, S.~Wang, F.~Wang, and J.~Zhou.
\newblock Differentially private generative adversarial network.
\newblock {\em arXiv preprint arXiv:1802.06739}, 2018.

\bibitem{xu2013differentially}
J.~Xu, Z.~Zhang, X.~Xiao, Y.~Yang, G.~Yu, and M.~Winslett.
\newblock Differentially private histogram publication.
\newblock {\em The VLDB Journal}, 22(6):797--822, 2013.

\bibitem{zhang2014privbayes}
J.~Zhang, G.~Cormode, C.~M. Procopiuc, D.~Srivastava, and X.~Xiao.
\newblock Privbayes: private data release via bayesian networks.
\newblock In {\em Proceedings of the 2014 ACM SIGMOD International Conference
  on Management of Data}, pages 1423--1434, 2014.

\bibitem{zhang2016privtree}
J.~Zhang, X.~Xiao, and X.~Xie.
\newblock Privtree: A differentially private algorithm for hierarchical
  decompositions.
\newblock In {\em Proceedings of the 2016 International Conference on
  Management of Data}, pages 155--170. ACM, 2016.

\end{thebibliography}



\end{document}